\pgfplotsset{width=7cm,compat=1.18}\usepgfplotslibrary{statistics}
\def\th@plain{%
  \thm@notefont{}
  \itshape 
}
\newtheorem{theorem}{Theorem}
\newtheorem{lemma}[theorem]{Lemma}
\newtheorem{assumption}[theorem]{Assumption}
\newtheorem{corollary}[theorem]{Corollary}
\newtheorem{definition}{Definition}
\newcommand{\atandue}{\mathrm{atan2}}
\newcommand{\tr}[1]{\overset{\triangle}{#1}}
\renewcommand{\mod}{\,\text{mod}}
\newcommand{\rev}[1]{\textcolor{black}{#1}}
\begin{document}

\title{Fast Shortest Path Polyline Smoothing With $G^1$ Continuity and Bounded Curvature}
\author{
    Patrick Pastorelli$^{a}$, Simone Dagnino$^{a}$, Enrico Saccon$^{a}$, Marco Frego$^{b}$, Luigi Palopoli$^{a}$
\thanks{
    Manuscript received: September 15, 2024; 
    Revised: December 20, 2024; 
    Accepted: January 30, 2025. \\
    This paper was recommended for publication by Editor Lucia Pallottino upon evaluation of the Associate Editor and Reviewers’ comments.
}
\thanks{
    $^{a}$Department of Information Engineering and Computer Science, University of Trento, Trento, Italy.
    \{patrick.pastorelli, simone.dagnino\}@studenti.unitn.it, {enrico.saccon, luigi.palopoli}@unitn.it.
}
\thanks{
    $^{b}$Faculty of Engineering, Free University of Bozen-Bolzano, Bolzano, Italy.
    marco.frego@unibz.it.
}

\thanks{2025 IEEE. Personal use of this material is permitted. Permission from IEEE must be obtained for all other uses, in any current or future media, including reprinting/republishing this material for advertising or promotional purposes, creating new collective works, for resale or redistribution to servers or lists, or reuse of any copyrighted component of this work in other works.
The definitive version~\cite{RAL} is available at: \url{https://www.doi.org/10.1109/LRA.2025.3540531}}
}
\markboth{IEEE Robotics and Automation Letters. Preprint Version. Accepted January, 2025}{Pastorelli \MakeLowercase{\textit{et al.}}: Fast Shortest Path Polyline Smoothing With $G^1$ Continuity and Bounded Curvature }

\maketitle


\begin{abstract}
  \rev{In this work, we propose the Dubins Path Smoothing (DPS) algorithm, a novel and efficient method for smoothing polylines in motion planning tasks. DPS} applies to motion planning
  of vehicles with bounded curvature. In the paper, we show that the generated path:
  \begin{enumerate*}
      \item has minimal length, 
      \item is $G^1$ continuous, and
      \item is collision-free by construction, under mild hypotheses.
  \end{enumerate*}
  We compare our solution with the state-of-the-art and show its convenience
  both in terms of computation time and of length of the compute path. 
\end{abstract}

\section{Introduction}
\label{sec:intro}
Motion planning is a fundamental task for many applications, ranging from robotic arm manipulation~\cite{10132587} to autonomous vehicle navigation~\cite{frego:2020}. The goal is to find a feasible (or optimal) path or trajectory to move an agent from a start to a target position, avoiding obstacles.

 When considering mobile robots subject to a minimum turning radius and moving at constant speed, the optimal path connecting two different configurations is given by Dubins curves~\cite{dubins}. When the robot has to travel across an area cluttered with obstacles, a popular strategy is to first identify a collision-free polyline joining an initial and a final configuration and then interpolate it by a smooth path (e.g., by a sequence of Dubins manoeuvres), which respect the kinematic constraint of the vehicle. Finding a good interpolation requires a good compromise between the computation time and quality of the resulting path. Some of the existing approaches, prioritise the quality of the path~\cite{Bevilacqua:2016}, but the high computational cost could make them unfit for real--time path generation~\cite{JOUR}. Conversely, methods that prioritise computational efficiency can sometimes produce paths that lack formal mathematical properties or feasibility guarantees~\cite{8407982}, compromising the quality of motion. Indeed, an important additional problem is that even if the initial polyline is guaranteed to be collision free, this is not necessarily true for the interpolated path~\cite{Bertolazzi:2019b}.

In this paper, we propose the Dubins Path Smoothing (DPS) algorithm, a novel solution designed to achieve a fast interpolation of a polyline with a sequence of motion primitives respecting  $G^1$ continuity and bounded curvature constraint. 
An example output of the algorithm is in Figure~\ref{fig:p1}, where a polyline of points, shown in blue, is approximated by a sequence of Dubins manoeuvres (in red) respecting the non-holonomic constraint of the mobile robot. \rev{DPS is a general algorithm for polyline approximation that can work on polylines given as input, e.g., polylines extracted from sample based or from cell decompositions methods. It does not account for the initial and final headings of the robot, but these can be connected to the remaining of the polyline by using Dubins manoeuvres.}

\begin{figure}[htp]
    \centering
    \begin{tikzpicture}
    \coordinate (p0) at (   0,  0);
    \coordinate (p1) at (   1,  3);
    \coordinate (p2) at (   5,  0.5);
    \coordinate (p3) at (7.25,  3.5);


    \coordinate (q0) at (0.6983776575305, 2.0951329725916);
    \coordinate (q1) at (1.8099692463654, 2.4937692210216);
    \coordinate (q2) at (4.4145572538608, 0.865901716337);
    \coordinate (q3) at (5.4157027037873, 1.0542702717165);

    \coordinate (c1) at (1.4121243611714,1.8572174047113);
    \coordinate (c2) at (4.8142285912329,1.5053758561323);

    \draw[blue, line width=1.75pt] (p0) -- (p1);
    \draw[blue, line width=1.75pt] (p1) -- (p2);
    \draw[blue, line width=1.75pt] (p2) -- (p3);

    \draw (c1) circle [radius=0.75];
    \draw (c2) circle [radius=0.75];

    \draw[] (c1) -- node[above]{$r$} (q0);
    \draw[] (c1) -- node[above]{$r$} (q1);
    \draw[] (c2) -- node[above]{$r$} (q2);
    \draw[] (c2) -- node[above]{$r$} (q3);

    \draw[red, line width=0.75pt] (p0) -- (q0);
    \draw[red, line width=0.75pt] (q1) -- (q2);
    \draw[red, line width=0.75pt] (q3) -- (p3);

    \draw [red, line width=0.75] (q1) arc[start angle=58, end angle=161.5, radius=0.75];
    \draw [red, line width=0.75] (q2) arc[start angle=238, end angle=323.13, radius=0.75];


    \fill (p0) circle (2pt) node[right] {$p_0$};
    \fill (p1) circle (2pt) node[above right] {$p_1$};
    \fill (p2) circle (2pt) node[below right] {$p_2$};
    \fill (p3) circle (2pt) node[right] {$p_3$};

    \fill (q0) circle (1.5pt) node[left] {$q_{1_1}$};
    \fill (q1) circle (1.5pt) node[above right] {$q_{1_2}$};
    \fill (q2) circle (1.5pt) node[below left] {$q_{2_1}$};
    \fill (q3) circle (1.5pt) node[below right] {$q_{2_2}$};
    

    
    \fill (c1) circle (1.5pt) node[below] {$c_1$};
    \fill (c2) circle (1.5pt) node[above] {$c_2$};

\end{tikzpicture}
    \caption{In blue the original 4-points polyline $P=\{p_0,\hdots,p_3\}$ and in red the smoothed path from DPS.}
    \label{fig:p1}
\end{figure}
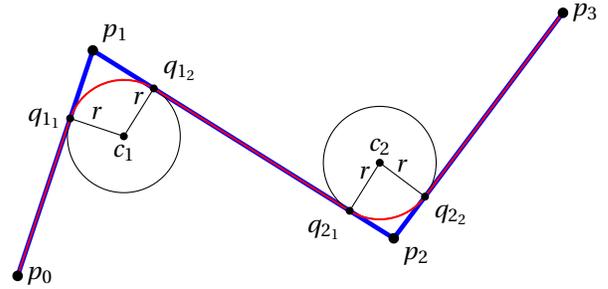

The key findings of the paper are the following: 
\begin{enumerate*}
    \item the length of the interpolated path is upper bounded by that of the polyline;
    \item under assumptions on the polyline, the interpolated path is guaranteed to be collision-free if the polyline is also collision-free, 
    \item the solution is robust: if we cut away some pieces of the solution, the remaining segments are not affected;
    \item given some constraints on the construction of the smoothed path, we prove its optimality, i.e., it is the shortest one;
    \item since the computed path is a combination of segments and arcs of circle, an offset of the solution is a path with the same characteristics and curve primitives. This is a fundamental property for tasks such as CNC where the set of available curve-classes is restricted.
\end{enumerate*}

In the paper, we describe the algorithm and offer a comprehensive theoretical analysis supporting the aforementioned claims. In addition, we provide both a sequential and parallel implementation\footnote{\url{https://github.com/KubriksCodeTN/DPS}} of the algorithm and show its efficiency and effectiveness against existing methods. Finally, we report our experimental results derived from its execution with a real robot with limited hardware to show in a concrete use-case its computational efficiency.

The remainder of this paper is structured as follows. In Section~\ref{sec:related}, we present a review of the state of the art on algorithms for interpolation and approximation of polylines. In Section~\ref{sec:problem}, we detail the tackled problem and in Section~\ref{sec:method} we show the proposed algorithm, also providing formal mathematical support to the claims. Experimental results and performance evaluation are presented in Section~\ref{sec:evaluation}, and in Section~\ref{sec:conclusion} we discuss the potential algorithm applications and future work.

\section{Related Work}
\label{sec:related}

Nonholomic vehicles cannot follow a sequence of segments as in a polyline, but must find a combination of smoothly joint segments to correctly follow the path. The problem of finding the shortest path with $G^1$ continuity to go from an initial position to a final one, with fixed initial and final orientation and also fixed minimum turning radius, was formalized in~\cite{markov1887some} and the solution was formulated in~\cite{dubins}, now known as the Markov-Dubins path.\\
A generalization of the problem, in which the goal is to interpolate multiple points, is called the Multi-Point Markov-Dubins Problem (MPMDP)~\cite{8989830, JOUR}. Some works have focused on optimal control formulations~\cite{JOUR,Goaoc}, which are then computed using Non-Linear Programming solvers~\cite{JOUR}, or Mixed-Integer Non-Linear Programming ones~\cite{8989830}. While this approach allows for finding precise solutions, it also comes with considerable computational costs and it may not converge. A different approach is presented in~\cite{8989830}, in which the authors sample possible angles for each waypoint and, by exploiting dynamic programming and refinements on the solutions, they are able to compute the path with great precision and low computational costs. An extension to their work allows for using GPGPUs in order to parallelize the computation and further improve performance~\cite{9529810}.

A related subproblem that has gained a lot of attention is the 3-Points Dubins Problem (3PDP)~\cite{Goaoc,7799192}, which is a sub-problem of the MPMDP focusing on interpolating 3 points with only the middle angle unknown. It can be used to change parts of the interpolated problem by inserting another point and connecting the previous and next node to the new one, while not changing the rest of the path. 


Previous work has dealt also with the MPMDP by proposing solutions which embed in the same procedure both the sampling of the points and the following interpolation based on Dubins curve. In \cite{8820677}, the authors propose an approach based on a modified essential visibility graph, in which they connect circles centered in the obstacles' vertexes. Similarly, in \cite{7748231}, the authors construct a so called Dubins path graph to hold all the different pieces of all the possible solutions, and then they run Dijkstra algorithm on a weight matrix to find the shortest path.
In \cite{8087897}, the authors propose a revisited A* algorithm, which takes into account the nonholonomic constraints. This new implementation of A* is applied to a weighted graph which represents the scenario by interpolating a Dubins path between each node and obstacle.
A combined solution for point sampling, using RRT*, and interpolation with Dubins curves is proposed in \cite{9538414}. A post-process algorithm prevents possible collisions from happening during the interpolation step by computing a Dubins path pairwise considering the initial and final angles aligned with the segments that form the path obtained from RRT*. 
A modified version of RRT* that takes into account nonholonomic constraints is proposed in \cite{8918671}. Here, the authors implemented an RRT* version that creates the Dubins path together with the sampling itself; this is achieved by connecting each new point with a feasible Dubins curve rather than a straight line as in classic RRT*. Similarly, in \cite{10311148,pallottino:2016}, the points are sampled in the 3D space from which the Dubins curves are directly created, representing feasible flight procedures defined by the International Civil Avionics Organization. 

Focusing on the problem of fitting (or smoothing) a polyline, an approach similar to our was presented in~\cite{1626645}, in which the authors propose an algorithm with complexity $O(n^2)$ to smooth the polyline by first finding maximum convex subsections of the polyline and then approximating them with curves. Another analogous approach was presented in~\cite{8407982}, in which the authors use Fermat Spirals to approximate the polyline with a series of differentiable curves, providing an easy to follow path for UAVs and UMVs. 
While these approaches may provide good approximations minimizing variations in the curvature, based on the theoretical complexity presented, they are not as performing as ours, and they do not discuss correctness and optimality. 
In recent times, it was shown that a curvilinear coordinate transformation could be used to iteratively adapt the polyline to the reference frame~\cite{wursching2024robust}. The authors consider the polyline as the control polygon of a B-spline, which is then iteratively refined in conjunction with a sampling step to reduce the curvature and create a smooth $C^2$ curve. Nevertheless, the authors neither consider length optimality, nor guarantee the curvature bound.

A final comparison can be made with biarcs~\cite{Bertolazzi:2019}, which provide a fast solution to a similar problem of fitting points given an initial and final angles with $G^1$ continuity. However, they do not consider bounded curvature nor minimize the length of the solution. 

\newcommand{\seg}[1]{#1}
\newcommand{\vect}[1]{#1}
\newcommand{\arc}[1]{#1}

\renewcommand{\seg}[1]{\overline{#1}}
\renewcommand{\vect}[1]{\overrightarrow{#1}}
\renewcommand{\arc}[1]{\widearc{#1}}

\section{Problem Statement}
\label{sec:problem}
In this section, we state the problem and show how it can be decomposed into sub-problems thanks to Bellman's principle of optimality, without loss of generality.

\begin{figure}[!b]
    \centering
    \begin{tikzpicture}
    \coordinate (pi) at ( 0  , 0);
    \coordinate (pm) at ( 7  , 0);
    \coordinate (pf) at ( 2.5 , 3);
    \coordinate (p0) at (-0.5 , 0);
    \coordinate (p1) at ( 2.07, 3.28);

    \coordinate (q0) at (2.8715, 0);
    \coordinate (q1) at (3.5649, 2.29);

    \def\radius{1.25}

    \coordinate (c1) at (2.8715, \radius);

    \draw[->] (0,0) -- (8,0) node [below] {$x$};
    \draw[->] (0,-0.3) -- (0,3.5) node [left] {$y$};
    
    \draw[] (pi) -- (pm);
    \draw[] (pm) -- (pf);

    \draw (c1) circle [radius=\radius];

    \draw[red, line width=1pt] (pi) -- (q0);
    \draw[red, line width=1pt] (q1) -- (pf);

    \draw [red, line width=1] (q0) arc[start angle=-90, end angle=56.31, radius=\radius];

    \draw [dashed, line width=0.75] (p0) -- (pi);
    \draw [dashed, line width=0.75] (pf) -- (p1);

    \draw [line width=0.5] (c1) -- node [left] {$r$} (q0);
    \draw [line width=0.5] (c1) -- node [left] {$r$} (q1);

    \draw [line width=0.5] (pm) -- node [below] {$l$} (q0);
    \draw [line width=0.5] (pm) -- node [above] {$l$} (q1);

    \draw [line width=0.5, dashed] (pm) -- node [above] {$d$} (c1);

    \pic [draw=blue, angle eccentricity=2, line width=1] {angle = pf--pm--pi};
    \node[] at (6.1,0.45) {$\textcolor{blue}{\alpha_m}$};
    \coordinate (tmp) at (8,0);
    \pic [pic text=$\textcolor{green}{\theta_m}$, draw=green, angle eccentricity=1.5, line width=1] {angle = tmp--pm--pf};

    \fill (pi) circle (2pt) node[below right] {$p_i$};
    \fill (pm) circle (2pt) node[below] {$p_m$};
    \fill (pf) circle (2pt) node[below] {$p_f$};

    \fill (q0) circle (1.5pt) node[below] {$q_{m_1}$};
    \fill (q1) circle (1.5pt) node[above right] {$q_{m_2}$};
    
    \fill (c1) circle (1.5pt) node[left] {$C$};
    
\end{tikzpicture}
    \caption{Visual representation of the standard placement for the 3-points sub-problem.}
    \label{fig:p3}
\end{figure}
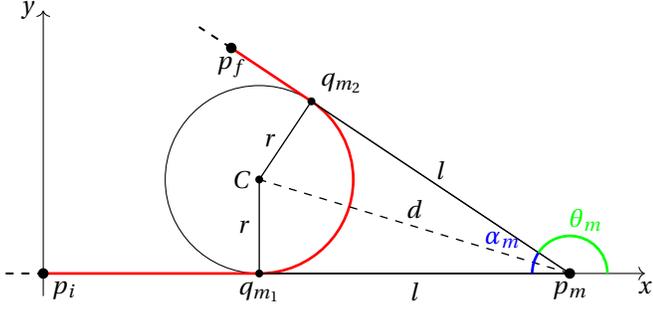

\begin{definition}[Polyline]
A polyline $P = (p_0, p_1, \hdots, p_{n-1})$ is an ordered sequence of $n > 2$ points $p_j=(x_j,y_j) \in \mathbb{R}^2$.
\end{definition}


\begin{assumption}[Conditions on the polyline]
\label{ass:polylineConditions}
Any triplet of consecutive points $p_{j-1}, p_j, p_{j+1}$ of the polyline $P$ is not-aligned, and, given a scenario with obstacles, $P$ is collision-free w.r.t. the obstacles by design (e.g., given by RRT*~\cite{rrtStar}).
\end{assumption}
We will later provide a formal proof of the minimum size of which the obstacles should be inflated in order to produce a collision-free polyline $P$ as in Assumption~\ref{ass:polylineConditions}.


\begin{definition}[Problem]
Let $P$ be a collision-free polyline \rev{taken as input to the problem}. The goal is to find a feasible path $\mathcal{P}$ approximating the polyline $P$ with a shortest path for a non-holonomic robot subject to a minimum turning radius $r$, respecting $G^1$ continuity. 
\end{definition}

We define the angles, see Figure~\ref{fig:p3}, 
\[
    \theta_j=\atandue(y_{j+1}-y_j,x_{j+1}-x_j),\quad j<n-1
\]
as the direction between point $p_j$ and point $p_{j+1}$, and 
\[
    \alpha_j = \pi - (\theta_j-\theta_{j-1})
\]
as the internal angle formed between $\vect{p_{j-1}p_j}$ and $\vect{p_jp_{j+1}}$.

To tackle the problem, without loss of generality, we can consider the sub-problem of finding an approximate path $Q$ with $G^1$ continuity and bounded curvature for 3 consecutive non-aligned points $p_i, p_m, p_f$ (respectively initial, middle and final points). When they are aligned, the solution coincides with the polyline itself. Then, we construct the circle of fixed radius $r$ tangent to the two lines defined by the polyline $\{p_i, p_m, p_f\}$, as shown in Figure~\ref{fig:p3}. This intersection defines the tangent points $q_{m_1}$ and $q_{m_2}$ in proximity of point $p_m$. 
Considering sub-problems composed of 3 points allows for an easier and more efficient computation of the approximate path, since each sub-problem can be solved separately and hence computed in parallel.
We want to construct the shortest path with $G^1$ continuity and bounded curvature interpolating with line segments and circular arcs the sequence of ordered points $Q = (p_i, q_{m_1}, q_{m_2}, p_f)$ shown in Figure~\ref{fig:p3}. The designed path is feasible for a non-holonomic robot to follow by construction. Indeed, every arc $\arc{q_{m_1}q_{m_2}}$ results from a circle with the radius equal to the minimum turning radius $r$ of the robot, and the arrival angle, after every arc, corresponds to the slope of the next straight segment in the path. The $G^1$ continuity and the bounded curvature ensure the feasibility of the path. 

It is important to notice that the segments between the points of $P$ must be long enough to contain the tangent points in the correct order. For instance, in the case of Figure~\ref{fig:p3}, the segment $\seg{p_ip_m}$ must be longer than the segment $\seg{q_{m_1}p_m}$. We ensure this in Lemma~\ref{lem:exConstraint}.


\section{Solution formalization}
\label{sec:method}

In this section, we will formalize the 3-points sub-problem and its solution. 
In order to ease the following demonstration, without loss of generality, we shall translate and rotate the initial configuration for the 3-points sub-problem. We start by shifting the points by the coordinates of $p_i$, bringing it to the origin, then we apply a rotation $R(\varphi)$ to bring the point $p_m$ to the positive $x$-axis. If $p_f$ is below the $x$-axis ($y_f<0$), we reflect it on the $x$-axis and consider it positive. 


\begin{lemma}[Existence constraint]
\label{lem:exConstraint}

Let $P=(p_i,p_m,p_f)$ be a polyline of 3 non aligned points, and $\alpha_m$ the angle in $p_m$, as in Figure~\ref{fig:p3}. The following constraint must be satisfied to ensure the existence of a $G^1$ approximation of the polyline:
\begin{equation}
    \min\left(\lvert\lvert p_m-p_i\rvert\rvert, \lvert\lvert p_f-p_m\rvert\rvert\right) \geq \frac{r}{\tan(\alpha_m/2)}.
    \label{eq:exConstraint}
\end{equation}
\end{lemma}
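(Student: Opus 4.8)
The plan is to reduce the existence question to a purely geometric condition on where the two tangent points $q_{m_1}$ and $q_{m_2}$ land relative to the endpoints of the polyline segments. The $G^1$ approximation replaces the corner at $p_m$ by a circular arc of radius $r$ tangent to both lines $\seg{p_ip_m}$ and $\seg{p_mp_f}$; this arc can be spliced into the path only if its two tangent points actually fall on the segments themselves and not on their prolongations. Hence the construction exists precisely when the tangent length measured from $p_m$ does not exceed either of the two adjacent segment lengths, and I would prove \eqref{eq:exConstraint} by making this equivalence explicit.

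First I would fix the inscribed circle. Since it is tangent to the two rays emanating from $p_m$ and has the prescribed radius $r$, its center $C$ lies on the bisector of the internal angle $\alpha_m$, and the radii $\seg{Cq_{m_1}}$ and $\seg{Cq_{m_2}}$ are perpendicular to $\seg{p_ip_m}$ and $\seg{p_mp_f}$, respectively. This produces two congruent right triangles $p_mCq_{m_1}$ and $p_mCq_{m_2}$, each with the right angle at the tangent point, the half-angle $\alpha_m/2$ at $p_m$, opposite leg $r$, and adjacent leg $l := \lVert q_{m_1}-p_m\rVert = \lVert q_{m_2}-p_m\rVert$ (the two tangent lengths from the external point $p_m$ coincide).

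Next, elementary trigonometry in either right triangle gives $\tan(\alpha_m/2)=r/l$, so the common tangent length is
\[
    l = \frac{r}{\tan(\alpha_m/2)}.
\]
Because the triplet is non-aligned we have $\alpha_m\in(0,\pi)$, hence $\tan(\alpha_m/2)>0$ and $l$ is well defined and positive. I would then impose the placement condition: the point $q_{m_1}$ lies on $\seg{p_ip_m}$ (rather than beyond $p_i$) if and only if $l\le\lVert p_m-p_i\rVert$, and symmetrically $q_{m_2}$ lies on $\seg{p_mp_f}$ if and only if $l\le\lVert p_f-p_m\rVert$. Requiring both simultaneously yields
\[
    \frac{r}{\tan(\alpha_m/2)} \le \min\!\left(\lVert p_m-p_i\rVert,\ \lVert p_f-p_m\rVert\right),
\]
which is exactly \eqref{eq:exConstraint}.

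The routine part is the right-triangle computation; the step I would argue most carefully is that the segment-membership requirement is genuinely \emph{necessary} and not merely sufficient. If $l$ exceeded one of the two segment lengths, the corresponding tangent point would fall outside the polyline, the arc would have to be joined to a prolongation of the line instead of to the actual segment, and the resulting curve would either depart from the polyline corridor or fail to meet the next primitive with matching tangent, breaking $G^1$ continuity. Establishing this "only if" direction cleanly is the main obstacle, since it is what upgrades the inequality from a convenient sufficient condition to a true existence criterion.
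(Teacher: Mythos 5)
Your proposal is correct and follows essentially the same route as the paper: both arguments identify the two congruent right triangles at $p_m$ with legs $r$ and $l$, obtain $l = r/\tan(\alpha_m/2)$, and then require the tangent points to lie within the adjacent segments, with the degenerate case $\alpha_m\in\{0,\pi\}$ excluded by non-alignment. The only differences are cosmetic (you work in general position rather than the paper's standard placement, and you spell out the necessity direction slightly more explicitly than the paper does).
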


\begin{proof}
In the standard setting, the angle $\alpha_m$ is equal to: 
\[
\alpha_m = \pi - \theta_m,\quad \theta_i=0.
\]
The two triangles $\tr{q_{m_1}p_mC}$ and $\tr{q_{m_2}p_mC}$ are congruent since they have three congruent edges, namely $r$, $d$, which they share, and $l$, which is $l = \sqrt{d^2-r^2}$. 

Since,
\begin{equation}
    r=d\sin{\frac{\alpha_m}{2}},\qquad l=d\cos{\frac{\alpha_m}{2}},
\label{eq:radius}
\end{equation}
it follows that:
\begin{equation}
    l=\frac{r}{\tan(\alpha_m/2)},\qquad d=\frac{r}{\sin(\alpha_m/2)}.
\label{eq:lengths}
\end{equation}
Consequently, the tangent points are: 
\[
    q_{m_1}=(x_m-l, 0),\qquad q_{m_2}=(x_m+l\cos{\theta_m},~ l\sin{\theta_m}),
\]
and, it must hold that:
\[
    l \leq x_m , \qquad
    l \leq||p_f-p_i||,
\]
i.e., the tangent points are within the polyline segments.

Finally, $\alpha_m\in\{0,\pi\}$ makes the denominator of Equation~\ref{eq:exConstraint} vanish, but this situation corresponds to aligned points, which is excluded by Assumption~\ref{ass:polylineConditions}.
\end{proof}





In the general case, when considering the whole polyline $P$, we also must ensure that the tangent points in the segment $\seg{p_jp_k}$ exist and are consecutive, that is, $||p_j-q_{j_2}||\leq ||p_j-q_{k_1}||$ and $||p_k-q_{k_1}||\leq ||p_k-q_{j_2}||$.

\begin{corollary}[Global existence condition]
\label{cor:exConstraints}
From Lemma~\ref{lem:exConstraint}, a solution exist if, for every two consecutive points $p_j,\,p_k \in P$, the following constraint is also satisfied: 
\begin{equation}
    ||p_j-p_k||\geq \left\lvert\frac{r}{\tan(\alpha_j/2)}\right\rvert + \left\lvert\frac{r}{\tan(\alpha_k/2)}\right\rvert.
\label{eq:exConstraints}
\end{equation}
\end{corollary}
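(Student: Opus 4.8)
The plan is to apply Lemma~\ref{lem:exConstraint} to each of the two triplets that share an interior segment, and then to re-express the required ordering of the tangent points as a single length inequality. First I would recall from the proof of Lemma~\ref{lem:exConstraint} that, when a point plays the role of the middle point of its triplet, the two tangent points adjacent to it are placed at distance $l=|r/\tan(\alpha/2)|$ from it along each incident segment; indeed, the standard-placement coordinates $q_{m_1}=(x_m-l,0)$ and $q_{m_2}=(x_m+l\cos\theta_m,\,l\sin\theta_m)$ both lie at distance $l$ from $p_m$. Hence, on the segment $\seg{p_jp_k}$, the exit tangent point $q_{j_2}$ of $p_j$'s arc sits at distance $l_j=|r/\tan(\alpha_j/2)|$ from $p_j$, while the entry tangent point $q_{k_1}$ of $p_k$'s arc sits at distance $l_k=|r/\tan(\alpha_k/2)|$ from $p_k$.

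Next I would use the fact that $q_{j_2}$ and $q_{k_1}$ are collinear, both lying on $\seg{p_jp_k}$ of length $L=||p_j-p_k||$. Collinearity gives $||p_j-q_{k_1}||=L-l_k$ and $||p_k-q_{j_2}||=L-l_j$, so the two consecutiveness requirements stated just before the corollary, namely $||p_j-q_{j_2}||\leq ||p_j-q_{k_1}||$ and $||p_k-q_{k_1}||\leq ||p_k-q_{j_2}||$, each reduce to the single inequality $l_j+l_k\leq L$. This is exactly Equation~\ref{eq:exConstraints}.

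I would then remark that this sum bound is strictly stronger than the per-triplet $\min$ condition of Lemma~\ref{lem:exConstraint}, because an interior segment is shared by two arcs, one anchored at each endpoint, and must therefore accommodate both tangent offsets simultaneously without the tangent points crossing. The degenerate cases $\alpha_j,\alpha_k\in\{0,\pi\}$ are excluded by Assumption~\ref{ass:polylineConditions}, just as in Lemma~\ref{lem:exConstraint}.

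The only real obstacle is the bookkeeping: one must correctly attribute each tangent point on a shared segment to the arc that produces it and verify that the two ordering inequalities genuinely collapse to the same sum bound. Once collinearity is used to rewrite a distance measured from one endpoint as the complement of the distance from the other, the algebra is immediate and the claim follows.
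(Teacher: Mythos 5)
Your proposal is correct and follows essentially the argument the paper intends: the paper gives no separate proof of the corollary, but the paragraph immediately preceding it states the two consecutiveness inequalities $||p_j-q_{j_2}||\leq ||p_j-q_{k_1}||$ and $||p_k-q_{k_1}||\leq ||p_k-q_{j_2}||$, and your reduction of these, via collinearity on $\seg{p_jp_k}$ and the offsets $l_j,l_k$ from Lemma~\ref{lem:exConstraint}, to the single sum bound $l_j+l_k\leq ||p_j-p_k||$ is exactly the intended derivation of Equation~\ref{eq:exConstraints}.
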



\begin{corollary}[Tangent points]
\label{cor:exTanPoints}
If the constraints in Lemma~\ref{lem:exConstraint} and Corollary~\ref{cor:exConstraints} are satisfied, then the tangent points $q_{j_1}, q_{j_2}$ between the circle and the vectors $v_{j_1} = \vect{p_{j-1}p_j}$ and $v_{j_2} = \vect{p_jp_{j+1}}$ in the general position exist and are given by:
\begin{equation}
    q_{j_1} = p_j - l_j\hat{v}_{j_1}\qquad q_{j_2} = p_j + l_j\hat{v}_{j_2}
\label{eq:exTanPoints}
\end{equation}
\end{corollary}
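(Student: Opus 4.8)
The plan is to show that the coordinate expressions already obtained in Lemma~\ref{lem:exConstraint} for the standard placement are nothing more than a rigid-motion–covariant restatement of the claimed formulas, so that they transfer verbatim to the general position. First I would recall that the passage to the standard placement is realised by an isometry of $\mathbb{R}^2$: a translation by $-p_i$, a rotation $R(\varphi)$, and, when $y_f<0$, a reflection about the $x$-axis. Each of these maps preserves distances and angles, sends unit vectors to unit vectors, and commutes with the affine combination $p_j \pm l_j \hat v$. In particular the tangent length $l_j = r/\tan(\alpha_j/2)$ depends only on the internal angle $\alpha_j$, which is invariant under the transformation, so $l_j$ takes the same value in both frames.

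Next I would rewrite the standard-placement result in intrinsic, coordinate-free form. In that frame the unit vector $\hat v_{m_1}$ along $\vect{p_i p_m}$ equals $(1,0)$, while the unit vector $\hat v_{m_2}$ along $\vect{p_m p_f}$ equals $(\cos\theta_m,\sin\theta_m)$, and $p_m=(x_m,0)$. Substituting these into the contact-point coordinates $q_{m_1}=(x_m-l,\,0)$ and $q_{m_2}=(x_m+l\cos\theta_m,\, l\sin\theta_m)$ furnished by Lemma~\ref{lem:exConstraint} gives exactly $q_{m_1}=p_m-l\,\hat v_{m_1}$ and $q_{m_2}=p_m+l\,\hat v_{m_2}$. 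Applying the inverse isometry and invoking its covariance then yields \eqref{eq:exTanPoints} for a generic triplet $p_{j-1},p_j,p_{j+1}$, since tangency of the radius-$r$ circle to the two lines is itself preserved by isometries.

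Finally I would dispatch existence. That the two lines through $p_j$ admit a common tangent circle of radius $r$, with contact points at distance $l_j$ from $p_j$ along each line, is guaranteed locally by Lemma~\ref{lem:exConstraint}, whose inequality places each contact point $q_{j_1},q_{j_2}$ inside its own segment. Corollary~\ref{cor:exConstraints} then rules out overlaps between contact points contributed by the two endpoints of a shared edge: for consecutive $p_j,p_k$ the bound $l_j+l_k\le\lVert p_j-p_k\rVert$ forces $q_{j_2}$ and $q_{k_1}$ to occur in the correct order along $\seg{p_j p_k}$. Hence all tangent points exist and are consistently ordered, as claimed.

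The only genuinely delicate point I anticipate is the bookkeeping of the optional reflection. When $y_f<0$ the map sends $\theta_m\mapsto-\theta_m$ and flips the second component of $\hat v_{m_2}$, so I would verify that the relation $q_{m_2}=p_m+l\,\hat v_{m_2}$ survives this sign change, confirming that the intrinsic formula is insensitive to which half-plane $p_f$ occupies; everything else is a direct consequence of the invariance of $p_j$, $l_j$, and the two unit directions under the rigid motion.
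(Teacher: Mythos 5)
Your argument is correct and matches what the paper intends: the paper states this corollary without a separate proof, treating it as an immediate consequence of the coordinate expressions $q_{m_1}=(x_m-l,0)$ and $q_{m_2}=(x_m+l\cos\theta_m,\,l\sin\theta_m)$ derived in the proof of Lemma~\ref{lem:exConstraint}, and your isometry-covariance argument is precisely the formalization of that step (with the reflection case handled automatically since reflections are orthogonal maps). The existence and ordering discussion via Lemma~\ref{lem:exConstraint} and Corollary~\ref{cor:exConstraints} likewise mirrors the paper's remarks preceding the corollary.
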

where $\hat{v}_{j_1}$ is the unit direction vector from $p_{j-1}$ to $p_j$, and $\hat{v}_{j_2}$ the unit direction vector from $p_j$ to $p_{j+1}$. 

\vspace{0.25cm}
Our DPS algorithm generates a path that does not interpolate $P$, but it is possible to derive an upper bound of its distance from $P$.

\begin{corollary}[Distance from the polyline points]
\label{cor:WdistanceFromPoints}
The path $Q$ has distance $D_j$ from the intermediate point $p_j$ of the polyline $P$ given by:
\begin{equation}
    D_j = d-r = r\left(\frac{1}{\sin(\alpha_m/2)}-1\right) = r\cdot\text{excsc}(\alpha_m/2)
\end{equation}
\end{corollary}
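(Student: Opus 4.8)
The plan is to read the result directly off the geometry set up in Lemma~\ref{lem:exConstraint}, where the middle point $p_m$ lies at distance $d$ from the center $C$ of the approximating circle, along the bisector of the internal angle $\alpha_m$. The quantity $D_j$ is the distance from $p_m$ to the nearest point of the path $Q$, so the first task is to locate that nearest point.

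First I would note that near $p_m$ the path $Q$ coincides with the arc $\arc{q_{m_1}q_{m_2}}$ of the circle of radius $r$ centered at $C$. Since the points are non-aligned we have $\sin(\alpha_m/2)<1$, hence $d=r/\sin(\alpha_m/2)>r$ and $p_m$ lies strictly outside the circle. The point of a circle closest to an external point is the intersection of the circle with the segment joining that external point to the center, at distance $d-r$; and by the symmetry of the two congruent triangles $\tr{q_{m_1}p_mC}$ and $\tr{q_{m_2}p_mC}$ established in Lemma~\ref{lem:exConstraint}, this intersection is precisely the midpoint of the arc $\arc{q_{m_1}q_{m_2}}$ and therefore belongs to $Q$.

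To be sure that $d-r$ is the minimum over the whole path and not merely over the arc, I would compare it with the distance from $p_m$ to the two straight segments of $Q$. Each segment leaves its tangent point ($q_{m_1}$ or $q_{m_2}$) while receding from $p_m$, so its nearest point to $p_m$ is the tangent point itself, at distance $l=\sqrt{d^2-r^2}=\sqrt{(d-r)(d+r)}$. As $d+r>d-r>0$, this exceeds $d-r$, confirming that the arc realizes the global minimum $D_j=d-r$.

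The final step is the substitution $d=r/\sin(\alpha_m/2)$ from Equation~\ref{eq:lengths}, giving $D_j=r/\sin(\alpha_m/2)-r=r\left(1/\sin(\alpha_m/2)-1\right)=r\cdot\mathrm{excsc}(\alpha_m/2)$. The only step that calls for genuine care is verifying that the closest point of the full path lies on the arc rather than on a tangent segment; everything else is an immediate consequence of the distances already computed in Lemma~\ref{lem:exConstraint}.
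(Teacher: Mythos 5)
Your proposal is correct and follows exactly the route the paper intends: the corollary is stated without proof as an immediate consequence of the geometry of Lemma~\ref{lem:exConstraint}, where $d=r/\sin(\alpha_m/2)$ is the distance from $p_m$ to the center $C$ and the nearest point of the circle to an external point lies at distance $d-r$ along $\seg{p_mC}$. Your extra check that the tangent segments are farther away (distance $l=\sqrt{(d-r)(d+r)}>d-r$), so the minimum is realized on the arc, is a detail the paper omits but is worth having.
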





The lemma and its corollaries are implemented in Algorithm~\ref{algo:ithpiece}, which runs in constant time.

\begin{algorithm}[ht]
\begin{algorithmic}[1]
\Function{solve\_3\_points}{$p_i=(x_i, y_i)$, $p_m=(x_m, y_m)$, $p_f=(x_f,y_f)$}
\State $v_1 \gets [x_m - x_i, y_m - y_i]^T$
\State $v_2 \gets [x_f - x_m, y_f - y_m]^T$
\State $\hat{v_1} \gets v_1/|| v_1||$
\State $\hat{v_2} \gets v_2/|| v_2||$\vspace{0.5em}
\State $l \gets r \dfrac{|| v_1 \times v_2||}{(v_1 \cdot v_2) +||v_1||\,|| v_2||}$\vspace{0.5em}
\State $q_{m_1} \gets p_m - l \hat{v_1}$
\State $q_{m_2} \gets p_m + l \hat{v_2}$
\State $Q \gets \left\{\overline{p_iq_{m_1}} + \widearc{q_{m_1}q_{m_2}}\right\}$
\\ \hspace{3.5mm} \Return{$Q$}
\EndFunction
\end{algorithmic}
\caption{The pseudo-code to compute the $i^{\text{th}}$ piece of the path $\mathcal{P}$ (i.e., $Q_i$), in the general case.}
\label{algo:ithpiece}
\end{algorithm}
\textbf{On the complexity of the algorithm}. 
Since the computation of each individual segment of the path can be performed in constant time, the overall complexity of the algorithm to compute all $n-1$ segments of the polyline $P$ is linear with respect to the number of points $n$. Since every time a new problem is provided, each segment of the path must be computed, the algorithm's complexity is bounded by $n$ in both the best and worst cases. Therefore, the algorithm operates in \(\Theta(n)\) time.

\begin{figure}[htp]
    \centering
    \begin{tikzpicture}[line width = 0.75]

\coordinate (p0) at (0,0);
\coordinate (p1) at (7.1384312561088015, -2.300608441653842);
\coordinate (p2) at (6.686028448495419, 1.8376679748828215);
\coordinate (q0) at (4.935687329310115, -1.5906973854314983);
\coordinate (q1) at (6.886925166746531, 1.308301491320322e-17);
\coordinate (c) at (5.395809017640884, -0.163011134209738);
\coordinate (H) at (6.238905397404775, -2.0107048606325297);
\coordinate (B) at (4, 1.5491657977445739);
\coordinate (ch1) at (3.958541906171354, -2.3872905223375125);
\coordinate (ch2) at (7.8062170595707405, 0.7494295069047394);
\def\radius{1.5}

\def\widthConstr{0.2}
\def\widthAxis{0.2}
\def\widthPath{1.25}

\def\colorL{blue}
\def\colorD{green}
\def\colorAlpha{black}
\def\colorPath{red}

\draw[->, line width = \widthAxis] (0,0) -- (8,0) node [below] {$x$};
\draw[->, line width = \widthAxis] (0,-2.5) -- (0,2) node [left] {$y$};

\draw[black] (p0) -- (q0);
\draw[\colorL] (q0) -- (p1);
\draw[\colorL] (p1) -- (q1);
\draw[black] (q1) -- (p2);
\draw[black] (ch1) -- (ch2);

\draw[\colorD, line width = \widthConstr] (p1) -- (c);
\draw[black, dashed, line width = \widthConstr] (c) -- (B);

\node at (p0) [left] {$p_i$};
\node at (p1) [below] {$p_m$};
\node at (q1) [below right] {$q_{m_2}$};
\node at (p2) [right] {$p_f$};
\node at (q0) [left] {$q_{m_1}$};
\node at (c) [left] {$c$};

\draw[orange] (H) -- (q1);
\filldraw[orange] (H) circle (1pt);
\node at (H) [below, orange] {$H$};

\draw[black] (c) circle (\radius);
\filldraw[black] (c)  circle (1pt);
\draw[black, line width = \widthConstr] (c) -- node[left]{$r$} (q0);
\draw[black, line width = \widthConstr] (c) -- node[above]{$r$} (q1);

\draw[\colorPath, line width = \widthPath] (p0) -- (q0);
\draw[\colorPath, line width = \widthPath] (q0) arc[start angle=-107.863346216765155, end angle=6.238888284396, radius=\radius];

\pic [pic text = $\textcolor{\colorAlpha}{\alpha}$, draw=\colorAlpha, angle eccentricity=1.5, line width=1] {angle = p2--p1--p0};
\pic [pic text = $\textcolor{black}{\theta_i}$, draw=black, angle eccentricity=2.2, line width=1] {angle = p1--p0--q1};
\coordinate (tmp1) at (8, -2.300608441653842);
\pic [pic text = $\textcolor{black}{\theta_m}$, draw=black, angle eccentricity=1.5, line width=1] {angle = tmp1--p1--q1};
\draw[black, dashed, line width=0.05] (p1) -- (tmp1);

\filldraw[black] (p0) circle (1pt);
\filldraw[black] (p1) circle (1pt);
\filldraw[black] (p2) circle (1pt);
\filldraw[black] (q0) circle (1pt);
\filldraw[black] (q1) circle (1pt);

\node[\colorL] at (5.5, -2) {$l$};
\node[\colorL] at (7.2, -1) {$l$};
\node[\colorD] at (6.3, -0.9) {$d$};
\node[\colorPath] at (2.5, -1.1) {$\ell_2$};
\node[\colorPath] at (5.7, -1.3) {$\ell_3$};

\end{tikzpicture}
    \caption{In red, an example of the J Dubins path type in the standard setting used in Theorem~\ref{th:DubinsPath}.}
    \label{fig:dubins}
\end{figure}
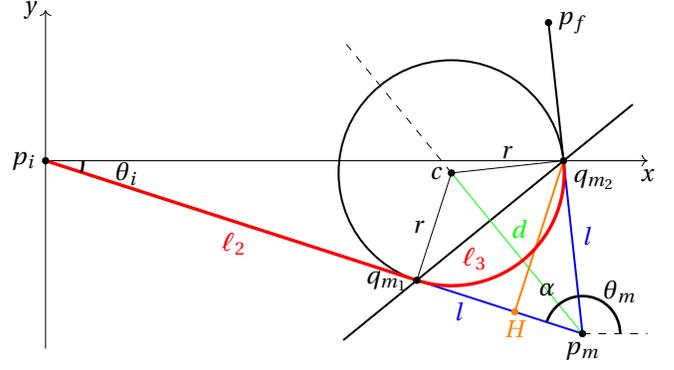

\begin{assumption}[Minimum distance between points]
\label{ass:dubinsDistance}
Let $r$ be the minimum turning radius of the robot, $p_j$ and $p_k$ two consecutive points of the polyline $P$ and $q_{k_2}$ the first tangent point after $p_k$. We assume that the points $p_j$ and $q_{k_2}$ have a distance of at least $4r$:
\[
||p_j-q_{k_2}||\geq 4r.
\]
\end{assumption}
\rev{This (sufficient) condition is a desirable property and it is called \emph{long distance points hypothesis} or \emph{far case} in the literature~\cite{PARLANGELI2024105814,PARLANGELI2019295,8989830,9529810,DubinsSet}. It is useful to avoid unnatural loops with CCC segments, and close points can be easily merged into a single representative via point. When Assumption~\ref{ass:dubinsDistance} is violated, DPS may still yield the shortest path, although no guarantees can be provided, unless Corollary~\ref{cor:exConstraints} is satisfied.}



\begin{theorem}[Optimality of $\mathbf{Q}$]
\label{th:DubinsPath}
The path $Q$, \rev{satisfying Assumption~\ref{ass:dubinsDistance}}, is a Dubins path, i.e., the shortest one of bounded curvature and $G^1$ continuity. 
\end{theorem}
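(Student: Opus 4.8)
The plan is to prove optimality by identifying $Q$ with a concrete Dubins primitive and then invoking Dubins' characterization of shortest bounded‑curvature paths. First I would work in the standard frame (translate $p_i$ to the origin, rotate so that $\theta_i=0$, reflect so that $y_f\ge 0$), so that the turn at $p_m$ is a left turn and the heading change along the piece $Q$ produced by Algorithm~\ref{algo:ithpiece} equals $\theta_m=\pi-\alpha_m\in(0,\pi)$. In this frame $Q$ is the concatenation of the straight segment $\seg{p_iq_{m_1}}$ (heading $0$) and the circular arc $q_{m_1}q_{m_2}$ of radius $r$ terminating at $q_{m_2}$ with heading $\theta_m$. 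Since the straight parts have zero curvature and the arc has curvature exactly $1/r$, the path is feasible (curvature $\le 1/r$), and tangency of the inscribed circle at $q_{m_1}$ and $q_{m_2}$ yields the $G^1$ continuity; hence $Q$ is an admissible competitor, and it remains only to show that no admissible path is shorter.

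The geometric heart of the argument is to show that $Q$ coincides with the Dubins $LSL$ path between the boundary configurations $(p_i,\theta_i)$ and $(q_{m_2},\theta_m)$ in the degenerate ``J'' form in which the first arc has zero length. I would compute the two left turning circles: the one attached to the start configuration has center $O_1=(0,r)$, and the one attached to the terminal configuration has center $C_L=q_{m_2}+r(-\sin\theta_m,\cos\theta_m)$. Using $q_{m_2}=p_m+l(\cos\theta_m,\sin\theta_m)$, $l=r/\tan(\alpha_m/2)$ and $\theta_m=\pi-\alpha_m$, a short reduction via the half‑angle identities $2\cos^2(\alpha_m/2)=1+\cos\alpha_m$ and $1-\cos\alpha_m=2\sin^2(\alpha_m/2)$ shows that $C_L=(x_m-l,\,r)$, i.e. $C_L$ is exactly the inscribed‑circle center $C$ of Lemma~\ref{lem:exConstraint}. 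Because $O_1$ and $C$ share the ordinate $r$, their common external tangent is the $x$‑axis, which touches the circles precisely at $p_i$ and $q_{m_1}$ and is aligned with the start heading $\theta_i=0$. Hence the $LSL$ path leaves $p_i$ with no initial turn (first arc of length zero), runs straight to $q_{m_1}$, and then performs a single left arc of angle $\theta_m$ about $C$ to $q_{m_2}$: this is exactly $Q$, of length $\ell_2+\ell_3=\lVert p_i-q_{m_1}\rVert+r\,\theta_m$.

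Optimality then follows from the classical Dubins theorem: the shortest $G^1$ path of curvature at most $1/r$ between two configurations is of type $CSC$ or $CCC$. Here Assumption~\ref{ass:dubinsDistance} enters: the long‑distance ($\ge 4r$) hypothesis places the two configurations in the \emph{far case}, for which the $CCC$ families are never optimal (as in the Dubins‑set classification of~\cite{DubinsSet,PARLANGELI2019295}), so the optimum is a $CSC$ path. Among the $CSC$ families, the same‑direction turn realizing the net heading change $\theta_m\in(0,\pi)$ has total turning $\theta_m$, strictly less than the $2\pi-\theta_m$ required by the opposite same‑direction family and than the turning incurred by the $LSR/RSL$ families for this configuration; hence the $LSL$ path is the shortest $CSC$, and by the previous paragraph it equals $Q$. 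The mirror case $y_f<0$ is handled identically with $R$ in place of $L$, so $Q$ is the Dubins path in every case.

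The step I expect to be the main obstacle is the optimality comparison within the $CSC$ families together with the clean exclusion of $CCC$: one must argue carefully that, in the far case, the single same‑direction arc realizing $\theta_m$ strictly beats every other Dubins word rather than merely matching the degenerate $LSL$, and pin down the exact distance threshold that Assumption~\ref{ass:dubinsDistance} is required to guarantee. The algebraic identity $C_L=C$ is routine once the half‑angle substitutions are made, but it is the linchpin that turns a generic $LSL$ into the zero‑first‑arc ``J'' form, so I would present it explicitly; everything else is bookkeeping in the standard frame.
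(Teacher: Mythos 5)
Your proposal is correct and reaches the same destination as the paper's proof --- both arguments reduce to the configuration pair $(p_i,\theta_i)\to(q_{m_2},\theta_m)$, use Assumption~\ref{ass:dubinsDistance} to exclude the CCC words, and identify $Q$ with the degenerate CSC (``J''-type) Dubins path whose first arc has length zero --- but the central identification step is carried out by a genuinely different route. The paper works analytically: it takes the closed-form CSC length formulas of~\cite{Piazza:2024}, imposes $\ell_1=0$ to get $\vartheta_S=\theta_i$, reads off $\ell_3=|\pi-\alpha|$, and then verifies $\ell_2=\lVert p_i-q_{m_1}\rVert$ through the auxiliary foot-of-height point $H$ in Figure~\ref{fig:dubins}. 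You instead prove the synthetic fact that the terminal left-turning circle $C_L=q_{m_2}+r(-\sin\theta_m,\cos\theta_m)$ coincides with the inscribed circle $C=(x_m-l,\,r)$ of Lemma~\ref{lem:exConstraint} (your half-angle computation checks out: $l\sin\alpha_m=r(1+\cos\alpha_m)$ gives the ordinate $r$, and $l(1-\cos\alpha_m)=r\sin\alpha_m$ gives the abscissa $x_m-l$), so that the common tangent of the two equal-radius circles is the $x$-axis and the LSL path collapses onto $Q$ with no length bookkeeping at all. Your version is more self-contained and arguably more illuminating; the paper's version gets the inter-word comparison ``for free'' by inheriting it from the cited closed-form solution. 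The one soft spot you correctly flag is shared by both proofs: neither rigorously compares the degenerate LSL against RSR, LSR and RSL --- your total-turning argument is suggestive but not a length comparison, since the straight segments of the four words differ, and the paper simply delegates this to~\cite{Piazza:2024}. If you want your route to stand alone, that comparison (or an explicit appeal to the far-case classification in~\cite{DubinsSet}) is the piece to make precise.
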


\begin{proof}
Using the principle of Bellman, it is possible to consider the sub-problem of showing the optimality of the sub-path given by three points $p_i, q_{m_1}, q_{m_2}$, see Figure~\ref{fig:dubins}. If the theorem holds for the sub-problem, then it holds also for the whole path $Q$. In order to do so, we show that the path $p_i, q_{m_1}, q_{m_2}$ is a Dubins path, i.e., the shortest one connecting with $G^1$ continuity and bounded curvature the points $p_i$ with angle $\theta_i$ and $q_{m_2}$ with angle $\theta_m$. The $G^1$ continuity and the bounded curvature are given by construction, and now we prove the length optimality.

To take advantage of the Dubins formulation, we need to recast the problem to Dubins standard setting, i.e., bringing the initial and final points ($p_i$ and $q_{m_2}$, respectively) on the $x$-axis with $p_i$ in the origin. Without loss of generality, we first perform a translation of the points, so that $p_i$ is on the origin. Then, we rotate the points, so that $q_{m_2}$ is on the positive $x$-axis, and finally, we scale the points, so that the radius is unitary, as in~\cite{Piazza:2024}.

We can now show that our path is a path of type J, that is, a path composed of a straight line and a circle. 

Dubins paths can be in the form of CSC, i.e., formed by a circle arc followed by a straight segment and finally by another circle arc, or in the form CCC, i.e., with three circle arcs~\cite{DubinsSet}. Assumption~\ref{ass:dubinsDistance} rules out the cases CCC, hence we have to consider CSC cases only. Starting from a CSC Dubins path, we show the condition for which the first arc has length $\ell_1 = 0$. We use formulas (13) and (19) of ~\cite{Piazza:2024}, herein adapted to our notation:
\begin{equation*}
   0= \ell_1 =|\vartheta_S-\theta_i|\mod 2\pi ,
\end{equation*}
which happens when the angle of the line segment $\vartheta_S$ is equal to the initial angle $\theta_i$, in other words, when  $\vartheta_S=\theta_i$. This  implies that the final arc has length $\ell_3$, given by:
\begin{equation*}
   \ell_3 = |\theta_m-\vartheta_S|\mod 2\pi = |\theta_m-\theta_i|= |\pi-\alpha|,
\end{equation*}
which is in agreement with the presented construction. It remains to show that also the length of the straight line segment is the same in Dubins' formula as well as in our construction. The length of the line segment $\ell_2$ from~\cite{Piazza:2024} is given,  substituting $\vartheta_S=\theta_i$, by:
\begin{equation*}
    \ell_2= \left(||p_i-p_m||-\sin\theta_m\right)\cos\theta_i+\sin\theta_i\cos\theta_m,
\end{equation*}
which is simplified with trigonometric manipulations to:
\begin{equation*}
    \ell_2 = ||p_i-p_m||\cos\theta_i-\sin(\theta_m-\theta_i).
\end{equation*}
We have to prove that $\ell_2=||p_i-q_{m_1}||$, that is, that the optimal Dubins length is the same as the length of our construction, so that the two perfectly match. We first note that $||p_i-p_m||\cos(\theta_i)$ corresponds to the segment $||p_i-H||$ of Figure~\ref{fig:dubins}, hence it remains to show that $||q_{m_1}-H||=\sin(\theta_m-\theta_i)$, the latter being equal to $\sin(\pi-\alpha)=\sin\alpha$. The length of the height $h=||q_{m_2}-H||$ is on one hand $h=\ell\sin\alpha$, on the other hand we have $h=||q_{m_1}-q_{m_2}||\cos(\alpha/2)$. Combining the latter two relations, we get
\begin{equation*}
\begin{array}{rcl}
    ||q_{m_1}-H||&=& ||q_{m_1}-q_{m_2}||\sin(\alpha/2)\\[0.7em]
    &=&\displaystyle \frac{r}{\tan(\alpha/2)}\sin\alpha \tan(\alpha/2) = r\sin\alpha,
\end{array}
\end{equation*}
and since, by hypothesis and without loss of generality, $r=1$, we proved that $||q_{m_1}-H||=\sin\alpha$ and thus that our construction is a Dubins path.
\end{proof}

\begin{corollary}[Optimality of $\mathcal{P}$]
Provided Corollary~\ref{cor:WdistanceFromPoints} and Theorem~\ref{th:DubinsPath}, and assuming the points $p_j\in P$ satisfy Corollary~\ref{cor:exConstraints}, the concatenation of the paths $Q_i$ forming $\mathcal{P}$ is the shortest path with distance $D_j$ or less from point $p_j$ and with bounded turning radius and $G^1$ continuity.
\end{corollary}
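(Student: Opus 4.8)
The plan is to obtain the global statement from the single-corner optimality of Theorem~\ref{th:DubinsPath} through Bellman's principle of optimality, as anticipated in Section~\ref{sec:problem}. First I would make the decomposition explicit: $\mathcal{P}$ consists of the straight-plus-arc pieces $Q_1,\dots,Q_{n-2}$ returned by Algorithm~\ref{algo:ithpiece}, one per intermediate vertex $p_j$, followed by the terminal straight run into $p_{n-1}$. Corollary~\ref{cor:exConstraints} guarantees that on every segment $\seg{p_jp_k}$ the tangent points $q_{j_2}$ and $q_{k_1}$ exist and appear in the correct order, so the pieces tile $\mathcal{P}$ without gaps or overlaps and meet at well-defined interface points. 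At each such interface $q_{j_2}$ the arc of $Q_i$ leaves with tangent direction $\theta_j$ (by tangency to $\seg{p_jp_{j+1}}$), which is exactly the direction with which the next piece enters its straight segment; hence position and heading agree at every junction, confirming that $\mathcal{P}$ is globally $G^1$, has curvature bounded by $1/r$ (radius-$r$ arcs and straight segments only), and — by Corollary~\ref{cor:WdistanceFromPoints} — approaches each $p_j$ at distance exactly $D_j$. Thus $\mathcal{P}$ is feasible and lies in the admissible class.

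Next I would recast the admissibility constraint as a sequence of local corner conditions amenable to Theorem~\ref{th:DubinsPath}. Any admissible competitor $\gamma$ must, in order, come within $D_j$ of each $p_j$ while turning from the incoming to the outgoing segment direction, i.e. through the exterior angle $\pi-\alpha_j$, under the curvature bound $1/r$. The decisive observation is that the distance bound is \emph{tight}: a circular corner of radius $R$ tangent to the two incident segments attains closest approach $R(\csc(\alpha_j/2)-1)$, which equals $D_j$ only when $R=r$ and exceeds $D_j$ whenever $R>r$. Since the curvature constraint forbids $R<r$, the requirement $\mathrm{dist}(\gamma,p_j)\le D_j$ pins the tightest admissible turn to radius exactly $r$, namely our arc. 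I would therefore argue that the local sub-problem solved by each $Q_i$ is precisely the constrained shortest-turn problem whose solution Theorem~\ref{th:DubinsPath} identifies as the radius-$r$ tangent arc joined to straight approaches.

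With the per-corner optimality in hand, I would close the argument by Bellman's principle: having shown that each piece $Q_i$ is the shortest feasible curve realizing its corner while honouring the distance bound $D_j$, and that consecutive pieces share consistent interface states, the concatenation cannot be improved on any piece without either violating a distance bound or breaking $G^1$ continuity; summing the per-piece lower bounds gives $\mathrm{length}(\gamma)\ge\mathrm{length}(\mathcal{P})$. Assumption~\ref{ass:dubinsDistance} is what makes this summation legitimate, since it forces every local solution into the CSC regime, ruling out CCC loops and guaranteeing that the straight segments between corners are long enough for the tangent points to remain ordered and the pieces to stay independent.

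The step I expect to be the main obstacle is the local lower bound for arbitrary, not-necessarily-circular competitors: showing that every $G^1$ curvature-bounded curve that turns through $\pi-\alpha_j$ and dips within $D_j$ of $p_j$ is at least as long as the radius-$r$ tangent arc. The circular comparison above handles single-arc turns, but a general maneuver could interleave arcs and segments; I would discharge this by invoking Theorem~\ref{th:DubinsPath} directly — it already certifies the radius-$r$ tangent arc as the shortest $G^1$ bounded-curvature connection between the two interface headings — and then verifying that relaxing the interface positions inward toward $p_j$ cannot shorten the connection, since any such relaxation would demand a turn tighter than $r$ and hence violate the curvature bound. Care is also needed to confirm that the admissible competitors genuinely share the same terminal configurations at $p_0$ and $p_{n-1}$, so that the telescoping of interface states is exact.
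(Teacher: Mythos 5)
Your skeleton --- decompose $\mathcal{P}$ into the corner pieces $Q_i$, certify each piece by Theorem~\ref{th:DubinsPath}, record the clearance via Corollary~\ref{cor:WdistanceFromPoints}, and concatenate through Bellman's principle with Corollary~\ref{cor:exConstraints} guaranteeing that the pieces tile without overlap --- is exactly the route the paper intends: the corollary is stated there with no proof at all, as an immediate consequence of the cited results, so structurally you have reproduced (and considerably elaborated) the intended argument.

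However, the step you yourself flag as ``the main obstacle'' is a genuine gap, and your proposed discharge does not close it. The competitor class in the statement is constrained only to come within $D_j$ of each $p_j$ with bounded curvature and $G^1$ continuity; nothing forces a competitor to travel along the polyline directions $\theta_{j-1}$ and $\theta_j$ near $p_j$, so your assertion that any admissible $\gamma$ ``must turn through the exterior angle $\pi-\alpha_j$'' with the same interface headings as $\mathcal{P}$ is unjustified, and without fixed interface configurations neither Bellman's principle nor Theorem~\ref{th:DubinsPath} (an optimality statement for \emph{fixed} endpoint headings) applies. Likewise, your tightness computation $R\left(\csc(\alpha_j/2)-1\right)$ covers only circular corners tangent to both incident segments; a general bounded-curvature competitor can enter the disk of radius $D_j$ about $p_j$ along a chord, or aim at the reflection point of that disk with respect to $p_{j-1}$ and $p_{j+1}$, which for an asymmetric corner is not the bisector point that $\mathcal{P}$ touches, and the claim that every such maneuver is at least as long as $Q_i$ is precisely what needs proving. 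To make the corollary rigorous one must either restrict the competitors to paths tangent to the polyline segments between corners (which is what the paper implicitly does) or supply a separate lower bound for the free-heading disk-visiting problem; your proposal, like the paper, does neither.
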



Now, we show the minimum offset required to enlarge the obstacles when using motion planning algorithms, such as visibility graphs~\cite{VisGraphs} and RRT*~\cite{rrtStar}, to create a road-map to ensure a collision-free path with DPS. \rev{Indeed, it is possible to notice from Figure~\ref{fig:reviewer2}, that in difficult scenarios, where other state-of-the-art interpolation algorithms may fail, if the theoretical hypotheses discussed are satisfied, DPS can provide a solution. We shall consider the visibility graph algorithm, in which the road-map is constructed between the vertexes of the polygons representing the (inflated) obstacles. The resulting polyline, is by construction the closest polyline to the obstacles and hence any other polyline not crossing the (inflated) obstacles, will be collision free.}

\begin{figure}[t]
    \centering
    \begin{tikzpicture}

\begin{axis}[
    axis equal image,
    width=\linewidth,
    axis lines=middle,
    xmin=-0.8, xmax=5.3,
    ymin=-1, ymax=2.85,
    xlabel={$x$},
    ylabel={$y$},
    axis line style={-Stealth},
    ticks=none
]

\fill[red!50] (axis cs:-0.5,-0.1) rectangle (axis cs:5,-1);
\fill[red!50] (axis cs:-0.5,0.1) rectangle (axis cs:1.9,2.5);
\fill[red!50] (axis cs:2.1,0.1) rectangle (axis cs:5,2.5);

\fill[lightgray] (axis cs:0,-0.6) rectangle (axis cs:4.5,-1);
\fill[lightgray] (axis cs:0,0.6) rectangle (axis cs:1.4,2);
\fill[lightgray] (axis cs:2.6,0.6) rectangle (axis cs:4.5,2);


\fill[blue] (axis cs:1.4520841787630676, -0.012304499431725513) -- 
             (axis cs:1.2548335005682745, -0.6913421787630675) -- 
             (axis cs:0.5757958212369325, -0.49409150056827444) -- 
             (axis cs:0.7730464994317257, 0.18494617876306751) -- cycle;

\fill[green] (axis cs:1.70711,0.792893) -- 
             (axis cs:2.20711,0.292893) -- 
             (axis cs:1.70711,-0.207107) -- 
             (axis cs:1.20711,0.292893) -- cycle;

\coordinate (p0) at ( 0, 0);
\coordinate (p1) at ( 2, 0);
\coordinate (p2) at ( 2, 2);
\coordinate (p3) at ( 2, 2.5);


\coordinate (d1) at (0.278954,-0.0396955);
\coordinate (d2) at (1.01394,-0.253198);
\coordinate (d3) at (2.2532,0.986061);
\coordinate (d4) at (2.0397,1.72105);


\def\pathWidth{1.25pt}

\draw [blue, line width=\pathWidth] (p0) arc[start angle=90, end angle=72, radius=1];
\draw[blue, line width=\pathWidth] (d1) -- (d2);
\draw [blue, line width=\pathWidth] (d2) arc[start angle=-106, end angle=18, radius=1];
\draw[blue, line width=\pathWidth] (d3) -- (d4);
\draw [blue, line width=\pathWidth] (d4) arc[start angle=197, end angle=180, radius=1];

\coordinate (q0) at (1,0);
\coordinate (q1) at (2,1);
\draw[green, line width=\pathWidth] (p0) -- (1.05, 0);
\draw [green, line width=\pathWidth] (q0) arc[start angle=-90, end angle=0, radius=1];
\draw[green, line width=\pathWidth] (q1) -- (p2);

\draw[orange, line width=\pathWidth, ->] (-0.5,0) -- (p0);
\draw[orange, line width=\pathWidth, ->] (p2) -- (p3);

\addplot[only marks, mark=*, mark options={fill=black}] coordinates {(0,0)};

\addplot[only marks, mark=*, mark options={fill=black}] coordinates {(2,0)};

\addplot[only marks, mark=*, mark options={fill=black}] coordinates {(2,2)};

\end{axis}
\end{tikzpicture}
    \caption{\rev{An example showing three obstacles (in gray), their bounding boxes (in red) computed with~\eqref{eq:minDistance} and two robots in green and blue. The green one uses DPS and is guaranteed not to hit any obstacle. The blue one uses a Dubins interpolation algorithm~\cite{8989830} and may still collide.}}
    \label{fig:reviewer2}
\end{figure}

\begin{assumption}[Convex obstacles]
The obstacles present in the map are convex.
If the obstacles are not convex, a convex hull can be employed to enable Theorem~\ref{th:minOffset}.
\end{assumption}

\begin{figure}[htp]
    \centering
    \def\widthConstr{0.2}
\def\widthObsSpace{0.5}

\def\colorObs{red}
\def\widthObs{1}

\def\colorPolyline{black}
\def\widthPolyline{0.50}

\def\colorPath{green}
\def\widthPath{1}

\def\colorRobCircle{BurntOrange}
\def\widthRobCircle{1}

\def\colorOffCircle{Purple}
\def\widthOffCircle{1}

\def\colorTurning{Periwinkle}

\begin{tikzpicture}[line width = 0.75]


\def\robotRadius{0.7}   
\def\offsetRadius{1.2}
\def\turningRadius{1.7} 
\coordinate (obs_p0) at (0, 0);
\coordinate (obs_p1) at (4, 0);
\coordinate (obs_p2) at (3.0, 2.5);
\coordinate (p0) at (0.0, -1.2);
\coordinate (p1) at (5.772439553712282, -1.2);
\coordinate (p2) at (4.292439553712281, 2.5);
\coordinate (q0) at (3.2614835192865503, -1.2);
\coordinate (q1) at (4.839893893791491, 1.1313641498019764);
\coordinate (c) at (3.2614835192865503, 0.5);
\coordinate (obs_q0) at (4.0, -1.2);
\coordinate (obs_q1) at (5.114172029062312, 0.44566881162492444);
\coordinate (B) at (0.8040361708966435, 2.163772855831102);
\coordinate (W) at (4.669197811084313, -0.4530689758188122);
\def\finalThArc{21.80140948635181}

\draw[\colorObs, line width = \widthObs] (obs_p0) -- (obs_p1);
\draw[\colorObs, line width = \widthObs] (obs_p1) -- (obs_p2);
\draw[dotted, line width = \widthObsSpace, \colorObs] (3.5, 0) -- (2.5, 2.5);
\draw[dotted, line width = \widthObsSpace, \colorObs] (3.0, 0) -- (2.0, 2.5);
\draw[dotted, line width = \widthObsSpace, \colorObs] (2.5, 0) -- (1.5, 2.5);
\draw[dotted, line width = \widthObsSpace, \colorObs] (2.0, 0) -- (1.0, 2.5);
\draw[dotted, line width = \widthObsSpace, \colorObs] (1.5, 0) -- (0.5, 2.5);
\draw[dotted, line width = \widthObsSpace, \colorObs] (1.0, 0) -- (0.0, 2.5);
\draw[dotted, line width = \widthObsSpace, \colorObs] (0.5, 0) -- (0.0, 1.25);

\draw[\colorTurning, dashed, line width = \widthRobCircle] (c) circle (\turningRadius);
\filldraw[\colorTurning] (c)  circle (1pt);
\draw[\colorTurning, line width = \widthConstr] (c) -- node[above]{$r$} (q1);

\draw[\colorRobCircle, dashed, line width = \widthRobCircle] (obs_p1) circle (\robotRadius);
\filldraw[\colorRobCircle] (obs_p1)  circle (1pt);

\draw[\colorOffCircle, dashed, line width = \widthRobCircle] (obs_p1) circle (\offsetRadius);
\filldraw[\colorOffCircle] (obs_p1)  circle (1pt);
\draw[\colorOffCircle, line width = \widthConstr] (obs_p1) -- node[above left]{$O$} (obs_q0);

\draw[dashed, line width = \widthConstr] (p1) -- (B);

\draw[\colorPolyline] (p0) -- (q0);
\draw[\colorPolyline, line width = \widthConstr] (q0) -- (p1);
\draw[\colorPolyline, line width = \widthConstr] (p1) -- (q1);
\draw[\colorPolyline] (q1) -- (p2);

\draw[\colorPath, line width = \widthPath] (p0) -- (q0);
\draw[\colorPath, line width = \widthPath] (q1) -- (p2);
\draw[\colorPath, line width = \widthPath] (q0) arc[start angle=-90, end angle=\finalThArc, radius=\turningRadius];

\filldraw[black] (p0) circle (1pt);
\filldraw[black] (p1) circle (1pt);
\filldraw[black] (p2) circle (1pt);
\filldraw[black] (q0) circle (1pt);
\filldraw[black] (q1) circle (1pt);
\filldraw[\colorTurning] (c) circle (1pt);
\filldraw[black] (obs_p1) circle (1pt);
\filldraw[black] (W) circle (1pt);

\node[\colorPolyline] at (p0)[below] {$p_i$};
\node[\colorPolyline] at (p1)[below right] {$p_m$};
\node[\colorPolyline] at (p2)[right] {$p_f$};
\node[\colorPolyline] at (c)[left, \colorTurning] {$c$};
\node[\colorPolyline] at (q0)[below] {$q_{m_1}$};
\node[\colorPolyline] at (q1)[right] {$q_{m_2}$};
\node[\colorObs] at (obs_p1)[right] {$v$};
\node[\colorPolyline] at (W)[right] {$w$};

\def\robotX{1.2}
\def\robotY{-1.2}
\def\robotW{\robotRadius*1.38} 
\def\robotH{\robotRadius*1.38} 
\def\wheelW{0.2}
\def\wheelH{0.1}
\def\wheelX {\robotX - .5*\wheelW}
\def\LwheelY{\robotY + .5*\robotH - 0.25*\wheelH}
\def\RwheelY{\robotY - .5*\robotH - 0.75*\wheelH}
\draw[draw=black, fill=white] (\robotX-.5*\robotW, \robotY-.5*\robotH) rectangle ++(\robotW,\robotH);
\draw[draw=black, fill=white, rounded corners=0.05cm, fill=black] (\wheelX,\LwheelY) rectangle ++(\wheelW,\wheelH);
\draw[draw=black, fill=white, rounded corners=0.05cm, fill=black] (\wheelX,\RwheelY) rectangle ++(\wheelW,\wheelH);
\draw[\colorRobCircle, line width = 0.5] (\robotX, \robotY) circle (\robotRadius);
\draw[\colorRobCircle, line width = \widthConstr] (\robotX, \robotY) -- node[below, \colorRobCircle]{$h$} (\robotX+\robotRadius*0.85, \robotY+\robotRadius*0.5); 

\pic [draw=black, angle eccentricity=1, line width=1] {angle = obs_p2--obs_p1--obs_p0};
\pic [draw=black, angle eccentricity=1, line width=1] {angle = p2--p1--p0};
\node[] at (6, -0.65)[] {$\alpha_m$};
\end{tikzpicture}
    \caption{A scenario in which the mitered offset $O$, in purple, is larger than the radius of the robot $h$ in orange. In blue, the turning radius of the robot and in green the path.}
    \label{fig:mitOffsetScenario}
\end{figure}
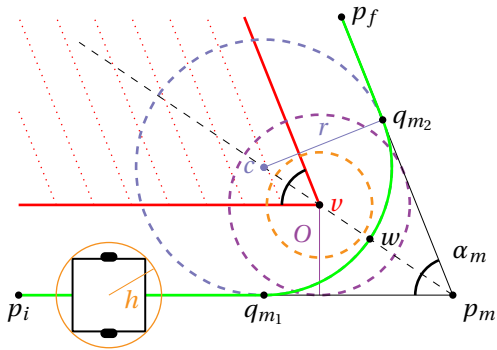

\begin{theorem}[Sufficient condition for a collision-free path]
\label{th:minOffset}
Let $h$ be the radius of the robot, $r$ the minimum turning radius and $\alpha_m$ the angle at the vertex $v$ of a convex obstacle, as shown in Figure~\ref{fig:mitOffsetScenario}. The path $Q$, in green, is collision-free if the obstacles in the environment are subject to a mitered offset~\cite{miter1,miter2,miter3} $O$ of at least:
\begin{equation}
    \label{eq:minDistance}
    O = \max\left(h\sin\frac{\alpha_m}{2} + r\left(1-\sin\frac{\alpha_m}{2}\right),~h\right).
\end{equation}
\end{theorem}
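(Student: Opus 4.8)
The plan is to reduce the claim, exactly as in the proof of Theorem~\ref{th:DubinsPath}, to a single obstacle vertex and a standard frame, and then to turn ``collision-free'' into a lower bound on the distance from $Q$ to the obstacle, computed separately on the two primitives of $Q$ (straight segments and corner arc). First I would place the vertex $v$ at the origin with its two incident edges symmetric about the $y$-axis, so that near $v$ the obstacle is the cone of half-angle $\alpha_m/2$ about the positive $y$-axis and the exterior bisector is the negative $y$-axis. A mitered offset pushes each edge outward by perpendicular distance $O$; the two offset lines meet on the exterior bisector at the polyline vertex $p_m$, and since the perpendicular distance from a bisector point to an edge equals $\|p_m-v\|\sin(\alpha_m/2)$, one gets $\|p_m-v\|=O/\sin(\alpha_m/2)$. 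By the DPS construction and \eqref{eq:lengths}, the corner arc of $Q$ is an arc of a circle of radius $r$ whose centre $c$ lies on the bisector at distance $d=r/\sin(\alpha_m/2)$ from $p_m$ on the obstacle side; thus $c$ sits on the $y$-axis at height $y_c=(r-O)/\sin(\alpha_m/2)$ relative to $v$.

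Next I would note that a disk-shaped robot of radius $h$ tracking $Q$ is collision-free iff every point of $Q$ is at distance at least $h$ from the obstacle, so it suffices to bound below the distance of each primitive. The straight segments lie on the offset edge-lines, at perpendicular distance exactly $O$ from the original edges, and their nearest obstacle point is the foot on the corresponding edge; hence the straight-segment constraint is simply $O\ge h$, the second argument of the maximum in \eqref{eq:minDistance}.

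For the corner arc I would use its symmetry about the bisector: when $c$ lies on the obstacle side of $v$ (equivalently $O\le r$, which is exactly the regime in which the corner is binding), the arc point nearest $v$ is its midpoint $w$, whose distance to $p_m$ is $d-r=r\bigl(1/\sin(\alpha_m/2)-1\bigr)$ --- the quantity $D_j$ of Corollary~\ref{cor:WdistanceFromPoints}. The key geometric observation is that $w$ lies on the exterior bisector below $v$, so the feet of its perpendiculars onto both edges fall behind $v$ and off the edges; hence near the midpoint the nearest obstacle feature is the apex $v$ rather than an edge. Since $v$, $w$, $p_m$ are collinear, the arc-to-apex distance equals $\|p_m-v\|-(d-r)=r-(r-O)/\sin(\alpha_m/2)$, and imposing this to be $\ge h$ and solving for $O$ gives $O\ge r\bigl(1-\sin(\alpha_m/2)\bigr)+h\sin(\alpha_m/2)$, the first argument of \eqref{eq:minDistance} (the bound $O\ge h$ also controls the arc near its endpoints, where the nearest feature is an edge). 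Taking the larger of the two constraints yields \eqref{eq:minDistance}; that the maximum is the correct selector follows because the first term minus $h$ equals $(r-h)\bigl(1-\sin(\alpha_m/2)\bigr)$, so the corner term dominates exactly when $r\ge h$, which is also where its derivation (and the hypothesis $O\le r$, giving $y_c=r-h\ge0$ at the binding offset) is consistent, while for $r<h$ the term $h$ dominates and the corner is then automatically clear. Finally, the per-vertex decomposition propagates the bound from a single vertex to the whole path $\mathcal{P}$.

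I expect the delicate step to be the arc analysis: establishing, over the entire arc and in both offset regimes $O\lessgtr r$, which obstacle feature realises the minimum distance. Showing that the apex $v$ governs the arc near its midpoint while the incident edges govern the straight segments and the arc endpoints is precisely what makes the two terms of \eqref{eq:minDistance} appear, and ruling out a smaller clearance at an intermediate arc point --- where the nearest feature switches from an edge to the apex --- is the part that needs the most care.
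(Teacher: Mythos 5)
Your proposal is correct and follows essentially the same route as the paper's proof: the same two constraints ($O\ge h$ for the straight portions and $\lVert v-w\rVert\ge h$ for the corner arc), the same use of the collinearity of $v$, $w$, $p_m$ on the bisector with $\lVert v-p_m\rVert = O/\sin(\alpha_m/2)$ and $\lVert c-p_m\rVert = d = r/\sin(\alpha_m/2)$, and the same resolution by solving for $O$ and taking the maximum. You additionally justify steps the paper leaves implicit --- why the arc midpoint $w$ realises the minimum clearance, why the apex rather than an edge is the nearest feature there, and the consistency of the regimes $O\lessgtr r$ and $r\lessgtr h$ --- which strengthens rather than changes the argument.
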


\begin{proof}
The goal is to find an offset $O$ such that the path $Q$, is always collision-free, i.e., the following constraints are satisfied along the whole path:
\begin{equation}
       O > h, \qquad
       ||v-w|| > h. 
    \label{eq:mitConst}
\end{equation}

Since we are considering a mitered offset, the offset $O$ is the radius of a circle with center on the vertex $v$ of the obstacle, as shown in
Figure~\ref{fig:mitOffsetScenario}. 

The second constraint of~\ref{eq:mitConst}, in terms of $r$ and $O$, becomes:
\[
    ||v-w|| = \frac{O}{\sin\left(\alpha_m/2\right)}-(||c-p_m||-r),
\]
where the value $||c-p_m||$ is $d$ from Equation~\ref{eq:lengths}, which can be expressed as a function of $r$:
\[
\begin{array}{rcl}
    ||v-w|| & = & \dfrac{O}{\sin\left(\alpha_m/2\right)} - \dfrac{r}{\sin(\alpha_m/2)} + r \\
            & = & \dfrac{O}{\sin\left(\alpha_m/2\right)} - r\left( \dfrac{1}{\sin(\alpha_m/2)} - 1 \right) > h.
\end{array}
\]
By solving for the offset $O$, we have:
\begin{equation}
    O > h\sin\frac{\alpha}{2} + r\left(1-\sin\frac{\alpha_m}{2}\right).
    \label{eq:offset2}
\end{equation}
Finally, the offset is the maximum between~\eqref{eq:offset2} and $h$.
\end{proof}


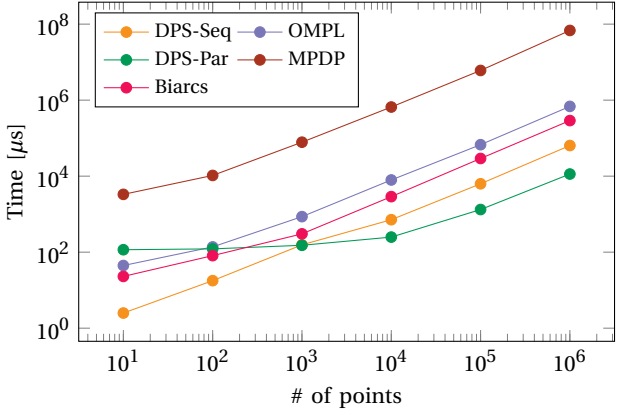
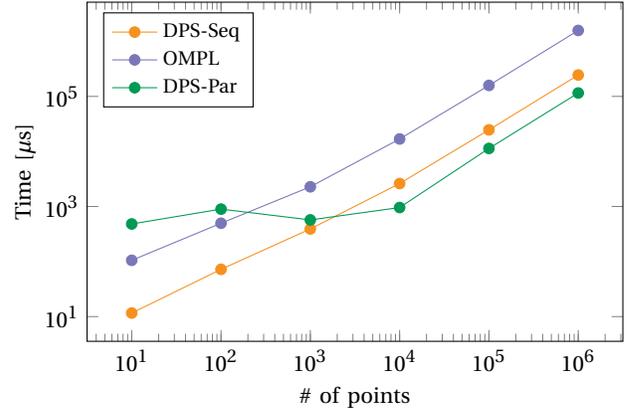
\begin{figure*}[t]
    \centering
    \begin{subfigure}[t]{0.48\linewidth}
        \pgfplotsset{compat=1.18}
\begin{tikzpicture}
\begin{axis}[
    xlabel={\# of points}, 
    ylabel={Time [$\mu$s]},
    xmode=log,
    ymode=log,
    ylabel style={inner sep=0pt},
    legend pos=north west,
    legend style={font=\footnotesize},
    width = \linewidth,
    height = .7\linewidth,
    legend columns = 2,
    legend cell align={left},
    font=\small,
]
    \addplot [BurntOrange, mark=*] table [y=dubinsSeq, x=nPoints, col sep=comma]{images/data/test1/times.csv};
    \addlegendentry{DPS-Seq}
    \addplot [Periwinkle, mark=*] table [y=dubinsOmpl, x=nPoints, col sep=comma]{images/data/test1/times.csv};
    \addlegendentry{OMPL}
    \addplot [ForestGreen, mark=*] table [y=dubinsCuda, x=nPoints, col sep=comma]{images/data/test1/times.csv};
    \addlegendentry{DPS-Par}
    \addplot [Mahogany, mark=*] table [y=mpdpSaccon, x=nPoints, col sep=comma]{images/data/test1/times.csv};
    \addlegendentry{MPDP}
    \addplot [OrangeRed, mark=*] table [y=biarc, x=nPoints, col sep=comma]{images/data/test1/times.csv};
    \addlegendentry{Biarcs}
\end{axis}
\end{tikzpicture}
        \caption{Tests run on a desktop PC.}
        \label{fig:pathInterpolationTimesPC}
    \end{subfigure}
    \hfill
    \begin{subfigure}[t]{0.48\linewidth}
        \pgfplotsset{compat=1.18}
\begin{tikzpicture}
\begin{axis}[
    xlabel={\# of points}, 
    ylabel={Time [$\mu$s]},
    xmode=log,
    ymode=log,
    ylabel style={inner sep=0pt},
    legend pos=north west,
    legend style={font=\footnotesize},
    width = \linewidth,
    height = .7\linewidth,
    legend cell align={left},
    font=\small
]
    \addplot [BurntOrange, mark=*] table [y=dubinsSeq, x=nPoints, col sep=comma]{images/data/test1/times_xavier.csv};
    \addlegendentry{DPS-Seq}
    \addplot [Periwinkle, mark=*] table [y=dubinsOmpl, x=nPoints, col sep=comma]{images/data/test1/times_xavier.csv};
    \addlegendentry{OMPL}
    \addplot [ForestGreen, mark=*] table [y=dubinsCuda, x=nPoints, col sep=comma]{images/data/test1/times_xavier.csv};
    \addlegendentry{DPS-Par}
\end{axis}
\end{tikzpicture}
        \caption{Tests run on an embedded system.}            
        \label{fig:pathInterpolationTimesPCXavier}
    \end{subfigure}
    \caption{Performance comparison of the proposed algorithm, OMPL, and MPDP in path interpolation tasks.}
    \label{fig:pathInterpolationTimes}
\end{figure*}


\begin{lemma}[$\mathbf{Q}$ is shorter than $\mathbf{P}$]
The path $Q$ has a total length that is not longer than the length of the polyline $P$.
\end{lemma}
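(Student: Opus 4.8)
The plan is to compare the two lengths \emph{vertex by vertex}, exploiting the fact that $Q$ coincides with the polyline except in a neighbourhood of each interior vertex $p_j$, $j=1,\dots,n-2$, where the sharp corner is replaced by a circular arc. First I would write the polyline length as $\sum_{j=0}^{n-2}\|p_{j+1}-p_j\|$ and the length of $Q$ as the sum of its straight pieces plus the lengths of its arcs. By Corollary~\ref{cor:exTanPoints}, each tangent point $q_{j_1}$ (resp.\ $q_{j_2}$) lies at distance $l_j=r/\tan(\alpha_j/2)$ from $p_j$ along the incoming (resp.\ outgoing) segment, so the straight pieces of $Q$ telescope: on $\seg{p_jp_{j+1}}$ the path travels only the middle part, of length $\|p_{j+1}-p_j\|-l_j-l_{j+1}$, while the two terminal segments each lose a single $l$ term. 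Summing, the total straight length equals the polyline length minus $2\sum_{j=1}^{n-2} l_j$, and each interior vertex contributes exactly one arc. Hence it suffices to prove, at every interior vertex, the \emph{local} inequality that the arc is no longer than the two cut-off stubs it replaces, namely that the arc length is at most $2l_j$.

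Next I would put both sides of the local inequality in closed form. Since the radii to $q_{j_1}$ and $q_{j_2}$ are orthogonal to the two incident segments, the arc subtends at the centre exactly the exterior (turning) angle $\pi-\alpha_j$, so its length is $r(\pi-\alpha_j)$; the two stubs have total length $2l_j = 2r/\tan(\alpha_j/2)$ by Equation~\eqref{eq:lengths}. Dividing by $r>0$, the claim reduces to the single-variable inequality
\[
    \pi-\alpha_j \;\le\; \frac{2}{\tan(\alpha_j/2)}, \qquad \alpha_j\in(0,\pi).
\]

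The main step is establishing this scalar inequality. Setting $g(\alpha)=2\cot(\alpha/2)-(\pi-\alpha)$ on $(0,\pi)$, I would argue by monotonicity rather than by direct estimation. A short computation gives $g'(\alpha)=-\csc^2(\alpha/2)+1=-\cot^2(\alpha/2)\le 0$, so $g$ is non-increasing, while its right-hand limit is $g(\pi^-)=2\cot(\pi/2)-0=0$; together these force $g(\alpha)\ge 0$ throughout $(0,\pi)$, which is exactly the required bound (with equality only in the degenerate straight-line limit excluded by Assumption~\ref{ass:polylineConditions}). I expect the delicate point to be precisely this near-straight regime $\alpha_j\to\pi$: there both the arc and the stubs vanish and the inequality becomes tight, so a crude term-by-term estimate is useless and one genuinely needs the monotonicity argument (equivalently, a second-order Taylor expansion showing the gap is of order $(\pi-\alpha_j)^3$). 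Summing the validated local inequalities over all interior vertices then yields that the arcs of $Q$ cost no more than the $2\sum_j l_j$ of straight length they save, completing the proof.
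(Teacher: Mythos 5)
Your proof is correct and follows essentially the same route as the paper's: both reduce the claim to the local inequality that each arc $r(\pi-\alpha_j)$ is no longer than the two cut-off stubs of total length $2l_j = 2r/\tan(\alpha_j/2)$, i.e.\ $\pi-\alpha_j \le 2\cot(\alpha_j/2)$ on $(0,\pi)$. The only difference is that you make explicit two steps the paper leaves implicit --- the telescoping of the straight pieces over the whole polyline and the monotonicity argument ($g'(\alpha)=-\cot^2(\alpha/2)\le 0$ with $g(\pi^-)=0$) establishing the scalar inequality, which the paper simply asserts to be ``always true.''
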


\begin{proof}
To show that this is true, we must show that the arc of circle is shorter than reaching the point of the polyline and going back:
$
   |\widearc{q_{m_1}q_{m_2}}|= r(\pi-\alpha) \leq 2l.
$
From Equation~\ref{eq:lengths}, it follows that:
\[
    \pi-\alpha \leq \frac{2}{\tan(\alpha/2)},
\]
which is always true for values of $\alpha\in[0,\pi)$. If $\alpha = \pi$, then the points are aligned (excluded by Assumption~\ref{ass:polylineConditions}). 

\end{proof}

\section{Experimental Results}
\label{sec:evaluation}


In this section, we present a thorough experimental evaluation of the proposed algorithm. The experiments were conducted on a desktop computer with an Intel i9-10940X processor and an Nvidia RTX A5000 GPU, running Ubuntu 22.04. We divide the tests in different sections in order to highlight the strengths and the weaknesses of our approach compared to other state-of-the-art algorithms. 

\subsection{Path Interpolation}

In this first experiment, we want to show the performance improvement that our algorithm brings over the state of the art ~\cite{8989830,Bertolazzi:2019} both for computation times and for the lengths of the approximated path over interpolation. To do so, we fit $N = 10^i, i\in \{1\hdots 6\}$ points randomly sampled and repeat the tests 1000 times. We show the average time values in Figure~\ref{fig:pathInterpolationTimes} and the average lengths in Table~\ref{tab:pathInterpolationLengths}. 

The points are sampled each time on a circle whose center is the previous point and whose radius is uniformly extracted in $[1,10]$, in such a way that the constraints, shown in Section~\ref{sec:method}, are satisfied.

We show the performance of our approach in comparison to three state-of-the-art algorithms: the Open Motion Planning Library (OMPL)~\cite{OMPL}, the iterative algorithm introduced in~\cite{8989830} that uses dynamic programming to interpolate multiple points with Dubins paths, hereafter referred to as MPDP\footnote{\url{https://github.com/icosac/mpdp}}, and the biarcs implementation\footnote{\url{https://github.com/ebertolazzi/Clothoids}} from~\cite{Bertolazzi:2019}. There are noteworthy differences between these and our algorithms:
\begin{itemize}
    \item OMPL does not support direct interpolation of multiple points, therefore, we exploit DPS to compute the intermediate points $q_i$ and angles $\theta_i$, which OMPL then uses to interpolate the path with Dubins manoeuvres.
    \item MPDP is a sampling-based algorithm designed to interpolate points precisely by finding an approximation of the optimal angles, ensuring that the final path passes through each point. Our algorithm, instead, focuses on finding shorter paths by not passing though the points, but preserving bounded curvature and $G^1$  continuity.
    \item Biarcs are similar to our approach, since they are $G^1$ curves with lines and circles, have computationally efficient closed form solution~\cite{Bertolazzi:2019}, but they do not constrain the curvature.
\end{itemize}
We implement and evaluate two versions of our algorithm: sequential and parallel, which is possible since the paths between triplets of points are independent, as discussed in Section~\ref{sec:method}. The parallel version is implemented in CUDA and is distributed on all the GPU cores available.
Figure~\ref{fig:pathInterpolationTimesPC} shows that our algorithm (both sequentially and in parallel, respectively DPS-Seq and DPS-Par) outperforms in computational times OMPL by approximately one order of magnitude and MPDP by two orders of magnitude. After an initial setback, the parallel implementation (DPS-Par) surpasses OMPL by one order of magnitude and MPDP by two orders of magnitude as the number of interpolated points increases. The initial lack of performance improvement in the parallel implementation is attributed to the time required to transfer data between CPU and GPU memory. From a performance aspect, our implementation appears to be faster in computational time and produces a shorter solution (being optimal), as shown in Table~\ref{tab:pathInterpolationLengths}.
We also run the same tests on a Nvidia AGX Xavier, which is an embedded system frequently used in automotive. Figure~\ref{fig:pathInterpolationTimesPCXavier} shows that DPS maintains the gain in performance shown in Figure~\ref{fig:pathInterpolationTimesPC}.
\begin{table}[htp]
    \centering
    \begin{tabular}{c|ccccc}
        \# of points & DPS & MPDP$_P$ & MPDP$_Q$ & OMPL & Biarcs \\
        \hline 
        $10^1$          & 1   & 1.040    & 1        & 1    & 1.11 \\
        $10^2$          & 1   & 1.090    & 1        & 1    & 1.22 \\
        $10^3$          & 1   & 1.085    & 1        & 1    & 1.21 \\
        $10^4$          & 1   & 1.086    & 1        & 1    & 1.21 \\
        $10^5$          & 1   & 1.086    & 1        & 1    & 1.21 \\
        $10^6$          & 1   & 1.086    & 1        & 1    & 1.21 \\
    \end{tabular}
    \caption{The ratio between the lengths found using the different algorithms against the lengths found with the sequential algorithm. MPDP$_P$ runs through the points $p$ of the polyline $P$, whereas MPDP$_Q$ runs through the tangent points found using our approach.}
    \label{tab:pathInterpolationLengths}
\end{table}
\def\baseline{OMPL}
\def\baselineTwo{MPDP}
\def\expOneName{DPS}
\begin{figure*}[t]
    \centering
        \begin{minipage}{0.48\linewidth}
            \input{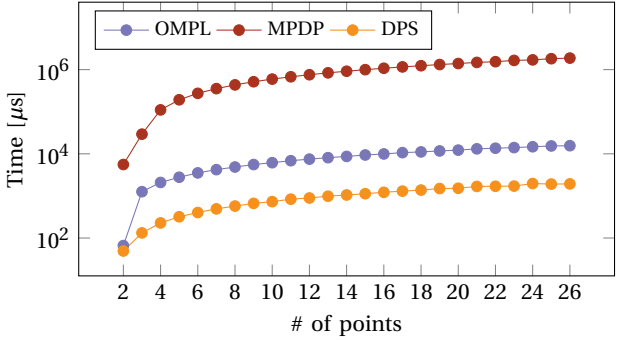}
        \end{minipage}
        \hfill
        \begin{minipage}{0.48\linewidth}
            \input{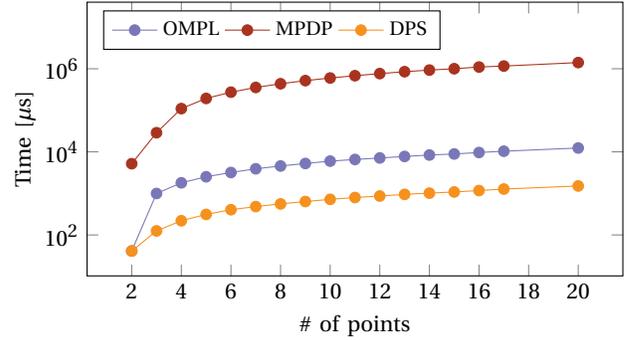}
        \end{minipage}
    \caption{The results of the experiments run on the visibility graphs showing the times to achieve a solution given the length of the path found on the visibility graph. }
    \label{fig:VGTimes}
\end{figure*}
In Table~\ref{tab:pathInterpolationLengths}, we experimentally validate that the paths obtained are actually a concatenation of Dubins manoeuvres. Indeed, MPDP yields longer paths interpolating the points $p_j$ (column MPDP$_P$). When used to interpolate the tangent points (column MPDP$_Q$), yields the same path as DPS, thus validating Theorem~\ref{th:DubinsPath}. An identical result is observed by running OMPL through the tangent points and their angles, which also validates the length optimality.

\subsection{Road Maps}
\label{ssec:RoadMaps}

In this experiment, we evaluate the performance of our algorithm DPS in computing a feasible path using a visibility graph on maps with obstacles chosen from~\cite{movingaiBench}.

For this experiment, we first construct the visibility road map~\cite{VisGraphs} by considering the vertices of the inflated obstacles (as described in Theorem~\ref{th:minOffset}), then we use A* to compute the shortest path on the road map and finally we interpolate using our algorithm. We compare our approach to MPDP, to which we must feed the extracted polyline, and to OMPL to which we have to pass the tangent points and their angles, as before.
The experiments were run on 10000 different couples of points, for statistical analysis. In Figure~\ref{fig:VGTimes}, the results show that, even when the number of points is low, our algorithm greatly outperforms the competitors. Moreover, the paths that are extracted are feasible to be followed by construction, as proven in Theorem~\ref{th:minOffset}. 

\subsection{Path Sampling Interpolation}
\renewcommand{\baseline}{OMPL}
\renewcommand{\expOneName}{OMPL-DPS}
\def\expTwoName{RRT*-DPS}


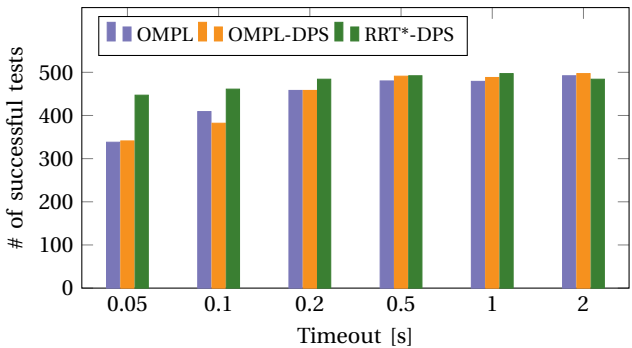
\begin{figure}[htp]
    \centering
    \begin{tikzpicture}
    \begin{axis}[
        ybar=0cm,
        %
        ymin=0,
        ymax=650,
        ylabel={\# of successful tests},
        xlabel={Timeout [s]},
        xtick=data,
        xticklabels={0.05, 0.1, 0.2, 0.5, 1, 2},
        legend pos=north west,
        legend columns=-1,
        legend style={font=\footnotesize},
        bar width=5pt,
        major grid style={draw=white},
        ymajorgrids, tick align=inside,
        width=\linewidth,
        height=0.6\linewidth,
        ytick={0,100,200,300,400,500},
        yticklabels={0,100,200,300,400,500},
        legend cell align={left},
        font=\small
    ]
    \addplot[Periwinkle, fill, bar shift=-5.4pt] coordinates {(1,338) (2,409) (3,458) (4,480) (5,479) (6,492)};
    \addplot[BurntOrange, fill]   coordinates {(1,341) (2,382) (3,458) (4,491) (5,488) (6,497)};
    \addplot[OliveGreen, fill, bar shift=+5.4pt] coordinates {(1,447) (2,461) (3,484) (4,492) (5,497) (6,484)};
    \legend{\baseline, \expOneName, \expTwoName}
    \end{axis}
\end{tikzpicture}


    \caption{The number of successful cases out of 500 tests at different timeouts.}
    \label{fig:RRTOMPLTimeouts}
\end{figure}

In this experiment, we compare three approaches based on RRT*~\cite{rrtStar}. First, we use the standard RRT* implementation available in OMPL as a baseline, applying it for both sampling the points and interpolating the shortest path with Dubins paths. Next, we present the results for \expOneName, where points are still sampled using RRT* of OMPL, but the shortest path is approximated using our approach. Finally, we introduce \expTwoName, which involves a standard implementation of RRT* from~\cite{rrtStar} coupled with our algorithm. It's important to note that our RRT* implementation does not incorporate heuristic functions or path smoothing (as done by OMPL). However, we employ a 1\% tolerance relative to the map size to reach the goal more precisely. This approach, not used in OMPL, may offer a speed advantage over the other approaches.\\
%
%
In order for RRT* to terminate, we need to specify a timeout. The experiment consisted in varying the number of timeouts in $\{0.05, 0.1, 0.2, 0.5, 1, 2\}$s, and, for every timeout, we randomly sampled 100 pairs of initial and final points and tested the algorithms. The sampled pairs were run 5 times each and the results shown in Figure~\ref{fig:RRTOMPLTimeouts} are the average values. Our algorithm manages to keep up with the state-of-the-art and outperforms the baseline in multiple timeouts. \\
We note that the bottleneck of path approximation via sampling based methods is the algorithm for sampling. Indeed, when we substitute RRT* from OMPL with a lighter implementation, the performance improvements are notable even for smaller timeouts. While this is indeed expected, the results still show that we are able to reduce the lengths of the paths, while yet maintaining a success rate at least as high as the state-of-the-art one.

\subsection{Physical Robot Experiment}

\begin{figure}[htp]
    \centering
    \includegraphics[width=0.9\linewidth]{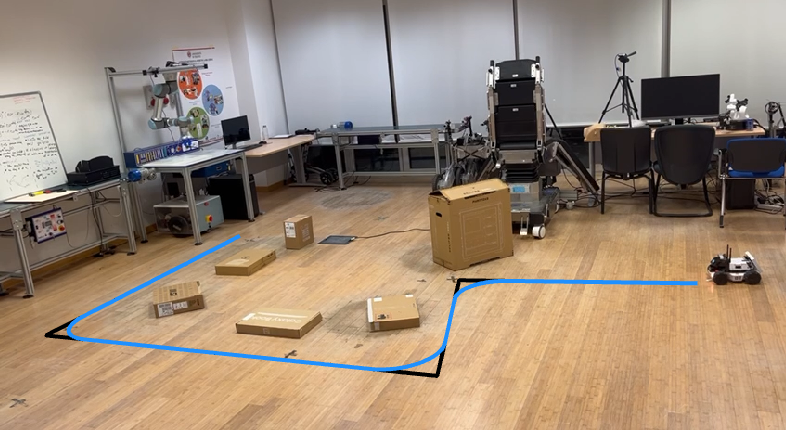}
    \caption{In black, the initial polyline $P$ that the robot was supposed to follow. In blue, the smoothed path $\mathcal{P}$ computed with DPS and followed.}
    \label{fig:real-experiment}
\end{figure}

Finally, we briefly showcase a real-life scenario in which DPS is used by a moving robot to smooth a polyline moving through obstacles.
%
%
For the experiment, we used a car-like vehicle named Agilex Limo, driven with the four-wheel differential mode. We embedded the DPS algorithm implementation in a ROS2 node, which sends command velocity messages to the robot. The lightweight computational requirements to solve the DPS algorithm enables also the Nvidia Jetson Nano, an embedded computer used onboard by the robot, to swiftly find a solution. In Figure~\ref{fig:real-experiment}, one can see the path followed by the car satisfying the above constraints and properties. 

\section{Conclusion}
\label{sec:conclusion}

We have presented DPS, a fast algorithm to generate the shortest smooth approximation of a polylines with $G^1$ continuity and bounded curvature. 
This approach comes with a series of important proved properties, i.e, proofs and conditions on existence, optimality, feasibility, and bounded distance, enabling not only to find an optimal approximation for the path, but also to ensure the quality of such path. 
As a future work, we aim at enabling higher levels of curvature continuity, i.e., $G^2$ with clothoids or $G^3-G^4$ with intrinsic splines.
\bibliographystyle{plain}
\nocite{*}
\bibliography{biblio.bib}

\end{document}